\documentclass{article}

\usepackage{arxiv}

\usepackage[utf8]{inputenc} 
\usepackage[T1]{fontenc}    
\usepackage{hyperref}       
\usepackage{url}            
\usepackage{booktabs}       
\usepackage{amsfonts}       
\usepackage{nicefrac}       
\usepackage{microtype}      
\usepackage{lipsum}		
\usepackage{graphicx}
\usepackage{natbib}
\usepackage{doi}
\usepackage{amssymb}

\author{Jules Sintes \\
	Inria\\
        DIENS, École Normale Supérieure, PSL University\\
        Paris, France\\
	\texttt{jules.sintes@inria.fr} \\
	\And
	Ana Bušić \\
        Inria\\
        DIENS, École Normale Supérieure, PSL University\\
        Paris, France\\
	\And
        Jiamin Zhu \\
        IFP Energies nouvelles\\
       Rueil-Malmaison, France\\
}

\renewcommand{\headeright}{}
\renewcommand{\undertitle}{}
\renewcommand{\shorttitle}{Structural Equivalence and Learning Dynamics in Delayed MARL}

\hypersetup{
pdftitle={Structural Equivalence and Learning Dynamics in Delayed MARL},
pdfsubject={cs.LG, cs.AI},
pdfauthor={Jules Sintes, Ana Bušić, Jiamin Zhu},
pdfkeywords={Reinforcement Learning, Multi-Agent, Dec-POMDP, Delays, Theory},
}

\usepackage{graphicx}           
\usepackage{subcaption}         
\usepackage[space]{grffile}     

\usepackage{multirow}
\usepackage{amsmath}
\usepackage{amsthm}
\usepackage{algorithm}
\usepackage{algpseudocode}
\usepackage{tikz}
\usepackage{standalone}
\usepackage{pgfplots}
\pgfplotsset{compat=newest}
\usepgfplotslibrary{fillbetween} 

\newtheorem{assumption}{Assumption}
\newtheorem{theorem}{Theorem}
\newtheorem{lemma}{Lemma}
\newtheorem{definition}{Definition}
\newtheorem{proposition}{Proposition}
\newtheorem{corollary}{Corollary}

\definecolor{ad_main}{RGB}{0, 80, 157} 
\definecolor{ad_cold}{RGB}{70, 160, 230} 
\definecolor{od_color}{RGB}{214, 80, 20}

\begin{document}
\title{Structural Equivalence and Learning Dynamics in Delayed Cooperative Multi-Agent Reinforcement Learning}
\maketitle
\begin{abstract}
    We formally establish the equivalence between Observation Delay (OD) and Action Delay (AD) in cooperative partially observable multi-agent systems using observation-action histories. We show that both systems generate identical admissible joint-policy sets, and their induced state-action-observation trajectories are identical in distribution, leading to identical optimal solutions in Decentralized Partially Observable Markov Decision Processes (Dec-POMDPs). This formally generalizes existing infinite-horizon single-agent results to any-horizon partially observable cooperative multi-agent problems with decentralized policy execution, and allows any mixed-delay configuration to be reduced to a pure OD system. We further prove that in Transition-Independent MDPs (TI-MDPs), the observation-action history reduces to a tractable minimal local augmented state.
    However, we show through numerical experiments that although the optimal solution spaces are structurally isomorphic, the practical learning dynamics are fundamentally different. First, using the minimal local augmented state, the equivalence no longer holds when transitions are not independent. Second, operational constraints and causal credit-assignment errors in Temporal Difference (TD) algorithms induce different learning behaviors across regimes. Finally, leveraging this structural equivalence to bypass these learning challenges, we demonstrate successful multi-agent zero-shot policy transfer from OD to AD, paving the way for unified, efficient solution methods in complex delayed systems.
\end{abstract}
\keywords{Reinforcement Learning, Multi-Agent, Dec-POMDP, Delays, Theory}

\section{Introduction}

Delay is a central challenge in controllable systems. In robotics and its applications, such as autonomous driving, control delay arises from computational or sensor latency \citep{mahmood2018setting,fuchs2021super}. This problem is even more prevalent in decentralized multi-agent systems, where each controlled agent may have a different delay. This increases the difficulty of coordinated control due to communication delays in network systems \citep{guo2023safe,xia2022optimal,wei2023drl} or physical inertia of the environment, such as in Wind Farms \citep{monroc2022delay}. Reinforcement Learning (RL) has emerged as a powerful tool for efficient control of increasingly complex systems. While it has demonstrated impressive empirical results in many domains, learning under delay remains highly difficult. Formally, delays can be introduced in RL problems described through the Markov Decision Process (MDP) framework. In this context, \cite{katsikopoulos2003markov} showed that OD and AD are formally equivalent in single-agent infinite-horizon problems with asynchronous cost collection. They approached delay equivalence through the lens of state augmentation and cost reshaping. Their method constructs a surrogate MDP where the optimal policy order is preserved.\\

In this paper, we propose a stronger form of equivalence based on \textit{observation-action histories}, effectively separating the structural equivalence from the learning problem. We show that under a proper initialization, the effective information set available to an agent at the moment of decision for an action is identical in both delay regimes. Consequently, the two regimes generate statistically identical trajectories, establishing a structural isomorphism. This implies that the equivalence holds for any horizon, any cooperative reward objective and structure (e.g., average reward, sparse reward), and under partial observability. Our results strengthen and extend the existing equivalence claim in single-agent scenarios and hold in cooperative multi-agent settings with decentralized policy execution. Consequently, we provide corollaries regarding the additivity of OD and AD when agents are subject to both delays, and show that any system with mixed-regime heterogeneous delays reduces to a pure OD system. Additionally, we propose sufficient statistics for Transition Independent dec-MDP (TI dec-MDP), enabling a more tractable reduction to an undelayed MDP, extending the augmented-state method from the single-agent to the multi-agent case. However, we demonstrate that structural equivalence of the solution space does not imply symmetric learning dynamics. While convergence towards equivalent policy holds, we empirically identify specific operational constraints, such as initialization constraints or causal credit-assignment errors, that render the AD regime harder to learn under standard Temporal Difference algorithms. Furthermore, we provide an empirical counter-example where a tractable augmented state leads to a different optimal policy when breaking Transition Independence. Finally, we demonstrate that the equivalence of policy set still has some practicality by effectively performing zero-shot transfer from OD regime learned policy to AD, on a complex multi-agent coordinated control environment (Multiwalker). \\

The paper is organized as follows. First, we give a comprehensive overview of existing results in the field of delayed reinforcement learning, introducing the core concepts used in this paper. Second, we derive the structural equivalence results and the associated corollaries. Then, we study the practical gap in the learning dynamics under OD and AD through experiments and additional theoretical results.

\section{Related Work}

To the best of our knowledge, there are very few theoretical results regarding delays in Multi-Agent Reinforcement Learning (MARL). Thus, unless explicitly mentioned, results cited in this section are derived for single-agent RL. This work focuses on Observation Delay (OD) and Action Delay (AD). Hence, our theory sets aside the reward collection delay problem. As highlighted by \cite{liotet2022delays}, the problem of reward collection delay mainly affects the performance during the learning phase. It is widely discussed theoretically in terms of regret in the bandit literature \citep{joulani2013online}. Here, we only focus on delay regimes as part of the environment that affects the performance of the policy execution. However, we shall discuss some practicality regarding reward collection when dealing with OD or AD. \\

Delays in RL often remain an applied problem in the literature, and most of the existing work focuses on methods to effectively handle delays. RL algorithms for handling delayed systems fall into three categories: Model-based \citep{walsh2007planning,derman2021acting,firoiu2018human}, memoryless \citep{schuitema2010control,hester2013texplore}, and augmented state. The latter is the most straightforward and at the core of OD and AD's equivalence claim. In single-agent settings, \cite{bertsekas1987dynamic} shows that an MDP with constant OD reduces to an undelayed MDP with an augmented space taking into account past decided actions. These results have been extended to AD, and stochastic delay regimes with asynchronous cost collection by \cite{katsikopoulos2003markov}. A direct consequence of their augmented state method is the functional equivalence of OD and AD. However, they assume \textbf{infinite-horizon} and a \textbf{cost collection delay} larger than the OD (or AD). In single-agent, \cite{chen2021delay} propose a delay-aware model-based method for continuous control and use the equivalence argument to justify focusing on AD regime, arguing that their results hold for OD but omitting any discussion regarding the underlying assumption for \cite{katsikopoulos2003markov} equivalence to hold in their settings. The method is extended to multi-agent systems, both competitive and cooperative, with the same argument \citep{chen2020delay}. In \cite{liu2024delay}, they use the same argument and choose to develop their algorithm for problems formulated with AD. However, applying this equivalence to multi-agent settings is risky. The original proof relies on specific cost-discounting mechanics that do not trivially extend to decentralized policies or finite-horizon cooperative tasks. It remains an open question whether a rigorous structural equivalence holds in decentralized multi-agent systems.\\

Overall, this work provides a theoretical backbone for the equivalence of OD and AD in cooperative decentralized multi-agent systems, also giving a stronger isomorphism for the single-agent case. Furthermore, we clarify existing claims from the literature, using the equivalence as well as superset arguments to justify applied work on delayed systems.

\section{Background}

\subsection{Cooperative Multi-agent Reinforcement Learning}

The general framework for MARL is the Markov Game, which generalizes the single-agent setting of the MDP to $N$ agents. While in a single-agent MDP (standard RL), the environment is stationary, Markov Games introduce non-stationarity from the perspective of individual agents.
Here, we are focusing on cooperative MARL with a shared reward. Then, the standard mathematical framework is the Decentralized Partially Observable Markov Decision Process (Dec-POMDP). These are known to be N-EXP hard to solve \citep{oliehoek2016concise}. \\

\noindent We consider the standard formulation of Dec-POMDP as the tuple :
$$\mathcal{M} = \langle \mathcal{N}, \mathcal{S}, \mathbf{A}, P, \mathbf{O}, \Omega, R, \gamma \rangle$$
where 
\begin{itemize}
    \item $\mathcal{N}$ is the set of $n$ agents. We use indices $i$ to refer to the $i$-th agent.
    \item $\mathcal{S}$ is the joint state space.
    \item $\mathbf{A} = \times_{i \in \mathcal{N}} \mathbf{A}_i$ is the joint action space, expressed as the product of local action spaces (which can be different for each agent). At every step $t$, each agent $i$ takes an action $a_i \in A_i$ and the joint action is the vector $\mathbf{a}=\langle a_1, \dots, a_n \rangle$.
    \item $P$ is the transition probability function specifying $\text{Pr}(s' \mid s,\mathbf{a})$, the probability of the state $s\prime$ when the agents execute the joint action $a$ from $s$.
    \item $\mathbf{\Omega}$ is the joint observation space. Similar to the joint action space, $\mathbf{\Omega} = \times_{i \in \mathcal{N}} \Omega_i $. 
    \item $O$ is the observation probability function mapping the current state to a joint observation $\mathbf{o}=\langle o_1, \dots, o_n \rangle$ defined as the vector of local observations for each agent. The function specifies $\text{Pr}(\mathbf{o}|s,\mathbf{a})$, mapping a state-action pair to a joint observation.
    \item $R$ is the immediate reward function. The reward function $R$ characterizes the immediate feedback generated by the environment and shared across agents with $R: \mathcal{S} \times \mathbf{A} \to \mathbb{R}$. The scalar reward $r_t$ received at time $t$ is defined as $r_t = R(s_t, \mathbf{a}_t)$.
    \item $\gamma$ is the discount factor, with $\gamma \in (0,1]$ and $\gamma < 1$ for infinite horizon problems.
\end{itemize} 

The tuple $\mathcal{M}$ fully defines the Dec-POMDP framework. We assume that agents are synchronous and act simultaneously based on local histories. At each step, an agent $i$ receives the observation $o_i \in \Omega_i$ computed with $O(s_t,\mathbf{a}_{t-1})$ and takes an action $a_{i,t} \in A_i$. The state $s_{t+1}$ is computed with $P(s_t,\mathbf{a}_t)$. To ensure clarity throughout our analysis, we explicitly distinguish between two temporal references. Our primary reference is the \textbf{effective time} $t$, which acts as an absolute, centralized environment clock tracking the exact moment an action takes effect. In contrast, the \textbf{decision time} is subjective and agent-specific, referring strictly to the moment an individual agent computes and chooses its action.

\subsection{Modeling Delays}

We consider the standard delay formulations for MDPs as defined by \cite{katsikopoulos2003markov}. Let $k \in \mathbb{N}$ denote the integer length of a delay. Delays are categorized into three distinct types:

\begin{itemize}
    \item \textbf{Observation Delay:} State information is not available instantaneously; at time $t$, the observation of a delayed agent is generated from the state $s_{t-k}$.
    \item \textbf{Action Delay:} An action selected at decision time step $t$ takes effect at the later step $t+k$.
    \item \textbf{Reward Delay:} The reward induced by an action is collected after a number of steps.
\end{itemize}

Here, we consider only Observation and Action delay and do not make any assumption on reward delay. In the rest of the paper, we denote $k$ the length of the delay in both delay regimes. In a multi-agent setting, each agent can be subject to a different type and length of delay. We will first consider the case where all agents are subject to the same type of delay (OD or AD).

\section{Formal Equivalence of Heterogeneous Delays in Multi-Agent settings}

\subsection{Definitions and Notation}

\begin{itemize}
    \item Let $\mathcal{M}$ be a Dec-POMDP with $n$ agents.
    \item Let $\mathbf{k} = (k_1, \dots, k_n) \in \mathbb{N}^n$ be a \textbf{Delay Configuration Vector} giving the delay value for each agent.
\end{itemize}

First, we define the history that encapsulates the exact state of knowledge available to an agent $i$ to determine an action taking effect at time $t$, assuming an information gap of size $k_i$.

\begin{definition}[Observation-Action History]\label{def:history}
For any agent $i \in \mathcal{N}$ subject to delay $k_i$, we define the \textit{observation-action history} $h_i^{(t, k_i)}$ associated with the effective time $t \ge k_i$ as:
\begin{equation}
     h_i^{(t, k_i)} \triangleq \left( o_{i, 0:t-k_i}, \quad a_{i, 0:t-1} \right)
\end{equation}
where the indices specify a sequence of time-ordered local observations and executed local actions.
\end{definition}

Here, $t$ refers to the exact effective time of the action currently being decided. We define this indexing for the clarity of the derivation of the structural equivalence. Hence, this history represents the knowledge in both regimes for an action taking effect at time $t$:
\begin{itemize}
    \item \textbf{OD:} The decision is made at time $t$. The agent  observes the delayed state up to $o_{t-k_i}$ and remembers its own immediate past actions up to $t-1$.
    \item \textbf{AD:} The decision for action $a_{i,t}$ must be made at time $\tau = t - k_i$. At this moment, the agent observes up to the current state $o_{\tau}$ (which is exactly $o_{t-k_i}$). Furthermore, it has a memory of its previously decided actions that are sitting in its buffer (see Definition \ref{def:action-buffer}) and will take effect up to $\tau + k_i - 1$ (which is exactly $t-1$). 
\end{itemize}
This definition describes the set of information available for a decision for both OD and AD and is at the core of the structural equivalence theorem we derive in this section.



\begin{definition}[Action Buffer]\label{def:action-buffer}
For any agent $i \in \mathcal{N}$ subject to an action delay $k_i \ge 1$, the \textit{action buffer} at decision time $\tau$ contains the pending actions that will take effect in the next $k_i$ steps. Formally, using effective-time indices:
\begin{equation}
    \mathbf{a}_{i, \tau : \tau+k_i-1} \triangleq (a_{i, \tau}, a_{i, \tau+1}, \dots, a_{i, \tau+k_i-1}) \in \mathcal{A}_i^{k_i}
\end{equation}
\end{definition}

\begin{definition}[Local Policy]\label{def:local-policy}
A (local) policy for agent $i$ is defined as a function
\[
\pi_i : \mathcal{H}_i \to \Delta(\mathcal{A}_i),
\]
where $\mathcal{H}_i$ denotes the set of observation-action histories $h_i^{(t,k_i)}$ available to agent $i$ at effective time $t$. For each observation-action history $h_i^{(t,k_i)} \in \mathcal{H}_i$, the policy $\pi_i(\cdot \mid h_i^{(t,k_i)})$ specifies the conditional distribution over actions chosen at time $t$.
\end{definition}

A local policy's distribution over actions depends solely on the observation-action history associated with the effective time $t$.\\

Throughout this section, we analyze the stochastic process induced by a fixed joint policy $\pi$. Agents do not adapt, re-plan, or condition on counterfactual decision times during execution. Consequently, all stochasticity in the system arises solely from the environment dynamics, observation probability function, and the policies’ internal randomness.

\subsection{Information Equivalence}

In this section, we construct the strict equivalence of delay regimes with trajectory isomorphism. We begin with the set of necessary assumptions


\begin{assumption}[FIFO Buffer with State-Independent Action Space]\label{assumption:deterministic-buffer}
For all agents $i \in \mathcal{N}$, the set of feasible actions $\mathcal{A}_i$ is independent of the current environment state $s_t$. Additionally, for an agent under AD, the local action buffer as in Definition \ref{def:action-buffer} strictly follows the First-In First-Out (FIFO) discipline.
\end{assumption}

This assumption is made without loss of generality. If an agent selects an action that is technically invalid upon execution at time $t$, the environment's transition dynamics simply handle it naturally (e.g., by producing no change in state). This ensures that any action in the buffer can always be mathematically executed, preserving the framework without limiting the types of problems it can model.

\begin{assumption}[Decentralized Execution of the Joint Policy]\label{assumption:dec-joint}
We consider the set of joint policies defined as the Cartesian product of independent local policies:
\begin{equation}
    \boldsymbol{\pi} = \langle \pi_1, \dots, \pi_n \rangle, \quad \text{where } \pi_i \text{ are as in Definition \ref{def:local-policy}}
\end{equation}
\end{assumption}

The decentralized execution assumption is relevant as the standard setting for multi-agent reinforcement learning. A centralized execution of the joint policy would trivially reduce to the single-agent case and is not of interest in this paper. Policies are defined as mapping from observation-action histories to probability distributions of actions. The timing of when an action is \textit{decided} is not part of the model and serves only as an interpretation of the information structure.\\

By defining $\pi_i$ strictly as a mapping from the observation-action history $h_i$ to the action space, we ensure that the decision-making logic is invariant to the effective time of computation. The policy $\pi_i$ functions as a "lookup table" or function approximator that outputs an action given an information state, regardless of whether that lookup occurs at $t$ (OD) or $t-k$ (AD).\\

If policies were allowed to condition explicitly on the "wall-clock time of decision" rather than the "effective time of the history," the two regimes would require distinct policy spaces, preventing a direct bijection.

\begin{assumption}[No Information Leakage Across Agents]\label{assumption:no-leakage}
For each agent $i \in \mathcal{N}$ and effective time $t$, the joint observation $\mathbf{o}_{t}$ is conditionally independent of other agents’ private histories, action buffers, decision times, or delay mechanisms, given the environment state $s_t$:
\begin{equation}
    Pr(\mathbf{o}_{t} \mid s_t, \mathbf a_{t-1}, \{\ h_{i}^{(t,k_i)}\}_{i \in \mathcal{N}})
= Pr(\mathbf{o}_{t} \mid s_t, \mathbf a_{t-1}) = O(s_t, \mathbf a_{t-1})
\end{equation}
and observations at time $t$ depend only on actions effective at times strictly less than $t$.
\end{assumption}

This assumption is required to prevent informational paradoxes between the two regimes.
\begin{itemize}
    \item \textbf{Buffer Privacy:} In the Action Delay regime, future actions $a_{t:t+k}$ are pre-computed and stored in the agent's private buffer. In the Observation Delay regime, these actions have not yet been computed. If another agent could observe this private buffer (violating the assumption) or even infer any other agent's private state, it would effectively possess information about the future that is unavailable in the OD regime.
    \item \textbf{Causal Consistency:} By enforcing that observations depend strictly on the environment state $s_t$ (which reflects only past effective actions $\mathbf{a}_{t-1}$), we ensure that the "visible" history of the process is identical regardless of whether actions are buffered at the agent (AD) or "buffered" by the flow of time (OD).
\end{itemize}

\begin{assumption}[Consistent Initialization of Information Structures]\label{assumption:init}
    Let $\mathcal{I}_0$ denote the initial common information (e.g., the prior belief $b_0$ or empty history $\emptyset$) available at $t=0$. We assume the initial conditions for both regimes are informationally equivalent:
    \begin{itemize}
        \item In the \textbf{OD} regime, for all $t < k$, the agent's available history is restricted to $h_t = \mathcal{I}_0$. The actions $a_0, \dots, a_{k-1}$ are generated by the policy $\pi_{OD}(\cdot \mid \mathcal{I}_0)$ at $t=0$ and $\pi_{OD}(\cdot \mid \mathcal{I}_0, a_{i,0:t-1})$ up to $k$.
        \item In the \textbf{AD} regime, the initial action buffer $\mathbf{a}_{buffer} = (a_0, \dots, a_{k-1})$ is initialized at $t=0$ by sampling the policy $\pi(\cdot \mid \mathcal{I}_0)$ conditioned on the initial information. This is needed to ensure consistency with the OD regime's first actions.
    \end{itemize}
    Consequently, the initial sequence of actions in both regimes is optimized with respect to the same prior $\mathcal{I}_0$.
\end{assumption}

This assumption is arguably the strongest in our settings. It ensures that we have a symmetry of both delay regimes by specifying an initialization phase during the length of the delay for $t<k_i$ during which agents get the same information set when deciding an action. In practice:
\begin{itemize}
    \item \textbf{OD}: During the first steps, an agent observes an initialization information set $\mathcal{I}_0$, and the local policy is conditioned on this set and the history of decided actions until the agent receives its first delayed observation at $t=k_i$.
    \item \textbf{AD}: To make it consistent and build an equivalent regime during the "initialization phase", we allow agents with action delay to pre-compute their action buffer according to their local policy and the same initialization information set $\mathcal{I}_0$.
\end{itemize}

The initialization information set $\mathcal{I}_0$ can be arbitrarily chosen. For example, it can be either an empty observation or the observation of the system at $t=0$, it does not matter, as long as it is the same in both regimes. Overall, we enforce the first actions to be computed based on the same set of information. In AD, this assumption might be more constraining to enforce. However, it is not practically irrelevant. We further discuss the necessity of this assumption in Section \ref{section:practical-gap}.\\

We define the \textit{effective information set} as the maximum subset of the system's history that an agent can actively access to compute its policy at decision time $t$. This leads to the following lemmas regarding information sets in both delay regimes:

\begin{lemma}[Information Sufficiency for Action Delay]
For an agent $i$ subject to AD of length $k_i$, the \textit{Effective Information Set} $\mathcal{I}_{i, AD}^{(t)}$ utilized to determine $a_{i, t}$ is:
\begin{equation}
\begin{split}
    \mathcal{I}_{i, AD}^{(t)} &= \left( o_{i, 0:\tau}, \quad a_{i, 0:\tau-1}^{\text{decided}} \right) \\
    &= \left( o_{i, 0:t-k_i}, \quad a_{i, 0:t-1} \right) \\
    &\equiv h_i^{(t, k_i)}
\end{split}
\end{equation}
\end{lemma}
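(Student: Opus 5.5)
The plan is to identify exactly what agent $i$ can access at the decision time $\tau = t - k_i$ at which $a_{i,t}$ must be chosen under AD, and then re-index that set in effective time. The argument splits into an observation component, an action component, and a boundary case for $t<k_i$.

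\textbf{Observations.} At wall-clock time $\tau$ the AD agent has received its current observation and all earlier ones, so its observational record is $o_{i,0:\tau}$. Assumption \ref{assumption:no-leakage} is what rules out anything more. It forbids any dependence of $\mathbf o_\tau$ on other agents' buffers, histories or decision times. It also forbids any dependence on actions effective at times $\ge \tau$, and in particular on the pending buffer contents. So no other observational or cross-agent information enters the set. Substituting $\tau = t-k_i$ gives the first component $o_{i,0:t-k_i}$.

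\textbf{Actions.} The agent remembers every action it has decided before $\tau$. The key step is to map decision indices to effective-time indices via the FIFO buffer of Assumption \ref{assumption:deterministic-buffer}. The decision made at time $\tau'$ enters the back of the buffer and is executed exactly $k_i$ steps later, at effective time $\tau'+k_i$. The buffer cannot drop or reorder entries, and action spaces are state-independent, so every buffered action is executed unchanged. I will argue this by induction on $\tau$ starting from the initialized buffer $(a_{i,0},\dots,a_{i,k_i-1})$ of Assumption \ref{assumption:init}.

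The induction invariant is that, at decision time $\tau$, the already-decided actions are exactly the executed actions $a_{i,0:\tau-1}$ together with the buffer $\mathbf a_{i,\tau:\tau+k_i-1}$ of Definition \ref{def:action-buffer}. Hence the set of decided actions $a^{\text{decided}}_{i,0:\tau-1}$, reindexed by effective time, equals $a_{i,0:\tau+k_i-1} = a_{i,0:t-1}$. The statement's index $a^{\text{decided}}_{i,0:\tau-1}$ should be read this way, as a decision-time labelling of the same sequence. Combining the two components, the effective information set matches Definition \ref{def:history}, i.e. equals $h_i^{(t,k_i)}$.

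\textbf{Boundary case.} For $t<k_i$ there is no genuine decision time $\tau\ge 0$. Here Assumption \ref{assumption:init} makes the available information $\mathcal I_0$ together with previously sampled buffer actions, which matches the OD initialization convention.

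The main obstacle is the action-index bookkeeping. I expect to state the FIFO map $\tau'\mapsto\tau'+k_i$ explicitly as a small bijection claim. Maximality of the set must also be justified: I will argue it by noting that any further variable would have to be a future observation, which is unavailable at $\tau$, or another agent's private quantity, which Assumption \ref{assumption:no-leakage} excludes.
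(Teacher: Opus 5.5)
Your proposal is correct and follows the same route as the paper's proof. Both arguments fix the decision time $\tau = t-k_i$, take the observations $o_{i,0:\tau}$, use Assumption~\ref{assumption:init} together with the FIFO buffer to identify the remembered decisions with the effective actions $a_{i,0:t-1}$, and then substitute $\tau = t-k_i$. Your additions tighten steps the paper states only informally: the explicit induction invariant (executed actions $a_{i,0:\tau-1}$ plus the buffer $\mathbf a_{i,\tau:\tau+k_i-1}$), the maximality remark via Assumption~\ref{assumption:no-leakage}, and the $t<k_i$ boundary case. In particular, the invariant makes explicit that the initial buffer must be counted among the decided actions for the indices to match.
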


\begin{proof}
    Consider an agent $i$ subject to AD $k_i$. To determine the action $a_{i,t}$ that takes effect at execution time $t$, the agent must make its decision at time $\tau = t-k_i$. 
    
    At this decision time $\tau$, the agent accesses all observations up to $\tau$ ($o_{i, 0:\tau}$) and a complete memory of its own past decisions ($a_{i, 0:\tau-1}^{\text{decided}}$). By Assumption \ref{assumption:init}, the first $k_i$ executed actions are drawn from the shared prior to prevent initialization asymmetry. This ensures the initial buffer aligns seamlessly with the rest of the agent's decision process. From $t=k_i$ onward, the executed actions are strictly the decisions the agent made starting at $\tau=0$. 
    
    Therefore, the sequence of decided actions up to $\tau-1$ maps perfectly to the sequence of effective actions up to $t-1$. Substituting $\tau = t-k_i$ yields $(o_{i, 0:t-k_i}, a_{i, 0:t-1})$, which is exactly $h_i^{(t, k_i)}$.
\end{proof}

\begin{lemma}[Information Sufficiency for Observation Delay]
For an agent $i$ subject to OD of length $k_i$, the \textit{Effective Information Set} $\mathcal{I}_{i, OD}^{(t)}$ is:
\begin{equation}
    \mathcal{I}_{i, OD}^{(t)} \equiv (o_{i, 0:t-k_i}, \quad a_{i, 0:t-1}) \equiv h_i^{(t, k_i)}
\end{equation}
\end{lemma}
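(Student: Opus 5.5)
The plan mirrors the proof of the preceding lemma for Action Delay. The difference is that under OD the decision time coincides with the effective time, so no change of time index is needed. I would fix an agent $i$ with OD of length $k_i$ and an effective time $t$, and determine exactly which observations and which actions are accessible when $a_{i,t}$ is chosen. Since the decision for $a_{i,t}$ is made at time $t$ itself, the argument has two parts: first characterize the observation component of $\mathcal{I}_{i,OD}^{(t)}$, then the action component, and finally compare the resulting pair with Definition \ref{def:history}.

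For the observation component, I would use the OD model: at any time $\tau$ the agent receives the observation generated from $s_{\tau-k_i}$, which is $o_{i,\tau-k_i}$ in effective-time indexing. Assuming perfect recall, at time $t$ the agent has accumulated exactly $o_{i,0:t-k_i}$, and nothing later, because $o_{i,t-k_i+1}$ only arrives at $t+1$. Assumption \ref{assumption:no-leakage} is what excludes any further information, such as other agents' buffers or histories, or information about states after $t-k_i$. For the action component, under OD there is no buffer: each action is executed at the step it is decided. Hence the agent's memory of its own decisions up to $t-1$ coincides with the executed sequence $a_{i,0:t-1}$.

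The one delicate point is the initialization phase $t<k_i$, where $t-k_i<0$ and no real observation has arrived. Here I would invoke Assumption \ref{assumption:init}. It fixes the history as $(\mathcal{I}_0, a_{i,0:t-1})$, so I would interpret $o_{i,0:t-k_i}$ for $t<k_i$ as the initial information $\mathcal{I}_0$, exactly the convention used in the AD lemma. With that convention the identity also holds on this range, and at $t=k_i$ the first real observation $o_{i,0}$ is added. Combining the two components gives $\mathcal{I}_{i,OD}^{(t)} = (o_{i,0:t-k_i}, a_{i,0:t-1}) \equiv h_i^{(t,k_i)}$.

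I expect the main obstacle to be the maximality in the definition of the effective information set. Showing that this set is contained in what the agent knows is straightforward. The converse requires showing that nothing beyond it is accessible, and that rests entirely on Assumption \ref{assumption:no-leakage}: observations must depend only on $s_t$ and $\mathbf{a}_{t-1}$, with no side channel. I would state this dependence explicitly rather than treat it as obvious.
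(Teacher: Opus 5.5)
Your proposal is correct and follows essentially the same argument as the paper. Under OD the decision time coincides with the effective time, the environment delivers observations only up to $o_{i,t-k_i}$, and the agent's undelayed memory of executed actions supplies $a_{i,0:t-1}$, which together give exactly $h_i^{(t,k_i)}$. The paper's short proof does not include your extra treatment of the range $t<k_i$ via Assumption~\ref{assumption:init}, or your explicit appeal to Assumption~\ref{assumption:no-leakage} for maximality (Definition~\ref{def:history} is stated only for $t \ge k_i$), but both are consistent with its argument.
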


\begin{proof}
    Consider an agent $i$ subject to OD $k_i$. Unlike AD, the decision time and execution time are identical ($\tau = t$). 
    
    However, the environment artificially delays the observations. At time $t$, the agent only receives observations up to $t-k_i$ ($o_{i, 0:t-k_i}$). Crucially, the agent's internal memory is not delayed; it remembers all actions it has executed up to $t-1$ ($a_{i, 0:t-1}$). This combination exactly forms the history $h_i^{(t, k_i)}$.
\end{proof}

Lemmas 1 and 2 establish that, for each agent $i$ and effective time $t$, the marginal information available to determine the action taking effect at time $t$ is identical under AD and OD, and is fully characterized by the observation-action history $h_i^{(t, k_i)}$. Especially, this equivalence concerns each agent’s local information and does not assume or require equality of joint information across agents at intermediate decision times. 


    

\subsection{Main Result}

The key observation is that, under both delay mechanisms, the action taking effect at effective time $t$ is a measurable function of the same observation-action history for each agent. Since observations depend only on past actions and the environment state, the two regimes generate identical distributions of actions at every time step, and hence identical trajectory laws by induction. Therefore, we derive the following theorem:

\begin{theorem}[Joint Policy Equivalence under Heterogeneous Delays]\label{theorem:main-equ}

Let the delay configuration vector be denoted $\mathbf{k} = (k_1,\dots,k_n)$, and let
$\Pi_{AD}(\mathbf{k})$ and $\Pi_{OD}(\mathbf{k})$ denote the sets of admissible
joint policies for the AD and OD regimes,
respectively. Under 
Assumptions \ref{assumption:deterministic-buffer},
\ref{assumption:dec-joint},
\ref{assumption:no-leakage}, and
\ref{assumption:init},
there exists a bijection
\begin{equation}
    \Phi : \Pi_{AD}(\mathbf{k}) \to \Pi_{OD}(\mathbf{k})
\end{equation}
such that, for any joint policy $\boldsymbol{\pi} \in \Pi_{AD}(\mathbf{k})$, the stochastic
process over state--action--observation trajectories $(s_t, \mathbf{a}_t, \mathbf{o}_t)_{t \ge 0}$ induced by $\boldsymbol{\pi}$ in the AD regime is identical to the stochastic process induced by $\Phi(\boldsymbol{\pi})$ in the OD regime. In particular, for every finite horizon $T$, the joint distributions of
\[
(s_0,\mathbf{a}_0,\mathbf{o}_0,\dots,s_T,\mathbf{a}_T,\mathbf{o}_T)
\]
coincide under the two regimes.
\end{theorem}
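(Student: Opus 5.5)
The plan is to construct $\Phi$ as the identity on the underlying local decision rules and then prove equality of trajectory laws by induction on $t$.

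\textbf{Construction of $\Phi$.} By Definition \ref{def:local-policy} and Assumption \ref{assumption:dec-joint}, an admissible joint policy in either regime is a tuple $\langle \pi_1,\dots,\pi_n\rangle$ of maps $\pi_i:\mathcal{H}_i\to\Delta(\mathcal{A}_i)$. Lemmas 1 and 2 show that the set $\mathcal{H}_i$ of histories $h_i^{(t,k_i)}$ is the same in both regimes. So $\Pi_{AD}(\mathbf{k})$ and $\Pi_{OD}(\mathbf{k})$ are the same product set $\times_i \Delta(\mathcal{A}_i)^{\mathcal{H}_i}$. During the initialization phase $t<k_i$, Assumption \ref{assumption:init} prescribes the same conditioning $(\mathcal{I}_0,a_{i,0:t-1})$ in both regimes, and we include these histories in $\mathcal{H}_i$ as well. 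I would set $\Phi(\boldsymbol{\pi})=\boldsymbol{\pi}$ and relabel the regime. Bijectivity is then immediate, with inverse the identity. Assumption \ref{assumption:deterministic-buffer} is what makes this legitimate: every buffered action remains feasible at its effective time, so no AD policy becomes inadmissible once it is read as an OD policy, or the reverse.

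\textbf{Factorization of the trajectory law.} For a fixed $\boldsymbol{\pi}$, I would write the law of $(s_0,\mathbf{a}_0,\mathbf{o}_0,\dots,s_T,\mathbf{a}_T,\mathbf{o}_T)$ in effective-time order as a product of four kinds of factors:
\begin{itemize}
\item the initial state distribution;
\item the observation kernels $O(\mathbf{o}_t\mid s_t,\mathbf{a}_{t-1})$;
\item the action factors $\prod_i \pi_i(a_{i,t}\mid h_i^{(t,k_i)})$;
\item the transition kernels $P(s_{t+1}\mid s_t,\mathbf{a}_t)$.
\end{itemize}
The main step is to justify this factorization in both regimes.

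In OD it is direct. Agents sample independently given their own histories, and by Assumption \ref{assumption:no-leakage} the observation depends only on $(s_t,\mathbf{a}_{t-1})$.

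In AD the action $a_{i,t}$ is sampled at decision time $\tau=t-k_i$, so the order in which random variables are generated differs from effective-time order. This is the main obstacle. I would handle it with a conditional-independence argument: $a_{i,t}$ is drawn from $\pi_i(\cdot\mid h_i^{(t,k_i)})$ using only private randomness, and $h_i^{(t,k_i)}$ is measurable with respect to variables with effective time at most $t-1$.

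Assumption \ref{assumption:no-leakage} supplies the remaining pieces. First, the environment variables $(s_{t'},\mathbf{o}_{t'})$ for $t'\le t$ are conditionally independent of the pending buffer contents given the realized effective past, since buffered actions influence the environment only once they become effective. Second, other agents cannot condition on $a_{i,t}$ before time $t$. Together these let the AD generation order be reordered into effective-time order without changing any conditional factor. Here the FIFO part of Assumption \ref{assumption:deterministic-buffer} ensures that the $t$-th executed action is exactly the one decided at $\tau=t-k_i$. Assumption \ref{assumption:init} ensures that the first $k_i$ actions have the same factor $\pi_i(\cdot\mid\mathcal{I}_0,a_{i,0:t-1})$ as in OD.

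\textbf{Induction and infinite horizon.} Once both laws have the identical product form, I would conclude by induction on $T$. For $T=0$ the initial factors coincide. For the step from $T$ to $T+1$, each factor appended to the $T$-marginal is the same function of the same variables in both regimes, so the $(T+1)$-marginals agree. Equality of all finite-dimensional distributions then gives equality of the processes $(s_t,\mathbf{a}_t,\mathbf{o}_t)_{t\ge0}$ on the infinite product space, by the Kolmogorov extension theorem (or Ionescu-Tulcea uniqueness).
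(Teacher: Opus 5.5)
Your proposal is correct and follows essentially the same route as the paper: you identify the two local policy spaces via Lemmas 1 and 2 (so $\Phi$ is the identity/relabeling, taken coordinatewise), factor the finite-horizon trajectory law into initial-state, observation, transition and policy kernels that are invariant across regimes by Assumptions \ref{assumption:no-leakage} and \ref{assumption:init}, and conclude by induction on $T$. Your explicit justification that the AD generation order can be rearranged into effective-time order, and your passage from finite-dimensional marginals to the whole process, make rigorous two steps that the paper only asserts.
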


To formally prove Theorem \ref{theorem:main-equ}, we must establish two properties: first, there exists a bijection $\Phi$ between the admissible joint-policy spaces of the AD and OD regimes; and second, this bijection preserves the induced stochastic process over the state-action-observation trajectories. \\

\begin{proof}

\textbf{Part 1: Isomorphism of the Policy Spaces --} 
By Assumption \ref{assumption:dec-joint}, the execution of a joint policy is decentralized and factorizes as the product of independent local policies, $\boldsymbol{\pi} = \langle \pi_1, \dots, \pi_n \rangle$. Each local policy $\pi_i$ is defined as in Definition \ref{def:local-policy}.

By Lemmas~1 and~2, for each agent $i$ and effective time $t \ge k_i$, the effective information available to determine the action that takes effect at time $t$ is identical in both the AD and OD regimes: the observation-action history $h_i^{(t,k_i)}$. 

Since the effective information sets are identical at the moment of decision for any effective time $t$, the admissible local policies for agent $i$ are identical in the AD and OD regimes. Both are functions of the form:
\[
\pi_i : \mathcal{H}_i \to \Delta(\mathcal A_i)
\]
Let $\phi_i$ be the bijection map on this space of local policies. We can construct a global mapping $\Phi$ such that:
\[
\Phi(\boldsymbol{\pi}) \triangleq \langle \phi_1(\pi_1), \dots, \phi_n(\pi_n) \rangle
\]
Because each $\phi_i$ is a bijection map over identical local policy spaces, $\Phi$ forms a strict bijection between the joint policy spaces $\Pi_{AD}(\mathbf{k})$ and $\Pi_{OD}(\mathbf{k})$.

\textbf{Part 2: Equivalence of the Induced Trajectories -- }
It remains to show that $\Phi$ preserves the induced trajectory distribution. To characterize the system behavior explicitly, we define a trajectory $\mathcal{T}_T$ up to time $T$:
\[
\mathcal{T}_T = \left( s_0, \mathbf{o}_0, \mathbf{a}_0, s_1, \mathbf{o}_1, \mathbf{a}_1, \dots, s_T, \mathbf{o}_T, \mathbf{a}_T \right)
\]
We now derive the probability distribution $Pr(\mathcal{T}_T \mid \boldsymbol{\pi})$ induced by a fixed joint policy $\boldsymbol{\pi}$. By the chain rule of probability and the Markov property of the Dec-POMDP environment, the probability of a full trajectory factorizes as:

\begin{equation}
    Pr(\mathcal{T}_T \mid \boldsymbol{\pi}) = Pr(s_0) \prod_{t=0}^T \left(
    \underbrace{Pr(\mathbf{o}_t \mid s_t, \mathbf{a}_{t-1}) Pr(s_{t+1} \mid s_t, \mathbf{a}_t)}_{\text{Environment Dynamics}} \prod_{i=1}^n \underbrace{Pr(a_{i,t} \mid h_{i,t})}_{\text{Agent Policy}}
    \right)
\end{equation}


The \textbf{Environment Dynamics} terms, $P(s_t,\mathbf{a}_t)= Pr(s_{t+1} \mid s_t, \mathbf{a}_t)$ and $Pr(\mathbf{o}_t \mid s_t, \mathbf{a}_{t-1}) = O(s_t, \mathbf{a}_{t-1})$, are defined strictly by the system. By Assumption~\ref{assumption:no-leakage} (No Information Leakage), the observation process depends only on the current environment state and action and is unaffected by the agents' internal delay mechanisms or private buffers. Thus, the Environment Dynamics terms are perfectly invariant across both delay regimes.

\paragraph{Conclusion:}
Because the policy terms $Pr(a_{i,t} \mid h_{i,t}) = \pi_i(a_{i,t} \mid h_i^{(t, k_i)})$ are mathematically identical for every agent $i$ and time $t$ in both regimes, and the environment dynamics are identical, the total product $Pr(\mathcal{T}_T \mid \boldsymbol{\pi})$ evaluates to the exact same value under OD and AD. 

By induction on $t$, the stochastic processes generated by the AD and OD regimes are identical in distribution under all stated assumptions. Therefore, $\Phi$ is an isomorphism between policy spaces that preserves the induced joint trajectory measure.
\end{proof}

\begin{corollary}[Equivalence of Optimal Value]\label{corr:eq-optimal}
Let $\xi = (r_0, r_1, r_2, \dots)$ denote a reward sequence corresponding to a system trajectory (history) $\mathcal{T}$ of arbitrary length. Let $G(\xi)$ be the cumulative return of the trajectory, defined as any measurable function of the reward sequence (e.g., discounted sum $\sum \gamma^t r_t$, total reward $\sum_{t=0}^T r_t$, or average reward).
Let $J(\boldsymbol{\pi}) = \mathbb{E}_{\xi \sim \boldsymbol{\pi}}[G(\xi)]$ denote the expected return induced by the joint policy $\boldsymbol{\pi}$. 
Under the assumptions of Theorem \ref{theorem:main-equ}, for any joint policy $\boldsymbol{\pi} \in \Pi_{AD}(\mathbf{k})$:
\begin{equation}
J_{AD}(\boldsymbol{\pi}) = J_{OD}(\Phi(\boldsymbol{\pi}))
\end{equation}
Consequently, the maximum achievable return coincides under both regimes:
\begin{equation}
\sup_{\boldsymbol{\pi} \in \Pi_{AD}(\mathbf{k})} J(\boldsymbol{\pi}) =\sup_{\boldsymbol{\pi}' \in \Pi_{OD}(\mathbf{k})} J(\boldsymbol{\pi}')\triangleq J^*
\end{equation}
Moreover, a joint policy $\boldsymbol{\pi}^*$ is optimal in the AD regime if and only if its counterpart $\Phi(\boldsymbol{\pi}^*)$ is optimal in the OD regime.
\end{corollary}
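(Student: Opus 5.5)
The plan is a pushforward-of-measures argument resting entirely on Theorem \ref{theorem:main-equ}. Fix $\boldsymbol{\pi} \in \Pi_{AD}(\mathbf{k})$ and write $\mathbb{P}_{AD}^{\boldsymbol{\pi}}$ and $\mathbb{P}_{OD}^{\Phi(\boldsymbol{\pi})}$ for the laws of the process $(s_t,\mathbf{a}_t,\mathbf{o}_t)_{t\ge 0}$ in the two regimes. Theorem \ref{theorem:main-equ} says these agree on every finite-dimensional cylinder $(s_0,\mathbf{a}_0,\mathbf{o}_0,\dots,s_T,\mathbf{a}_T,\mathbf{o}_T)$. I would then extend this to equality on the full product $\sigma$-algebra of infinite trajectories. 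The finite-dimensional cylinders form a $\pi$-system that generates this $\sigma$-algebra, so Dynkin's $\pi$-$\lambda$ theorem, or equivalently uniqueness in the Kolmogorov extension theorem, gives $\mathbb{P}_{AD}^{\boldsymbol{\pi}} = \mathbb{P}_{OD}^{\Phi(\boldsymbol{\pi})}$ as measures on path space.

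Next I handle the reward sequence. Since $r_t = R(s_t,\mathbf{a}_t)$ with $R$ a fixed measurable function common to both regimes (rewards are indexed by effective time, and no assumption on reward delay enters), the map $\mathcal{T}\mapsto \xi=(R(s_t,\mathbf{a}_t))_{t\ge0}$ is measurable from path space to $\mathbb{R}^{\mathbb{N}}$. Composing with the measurable $G$ makes $G(\xi)$ a measurable functional $F$ of the trajectory. Hence $J_{AD}(\boldsymbol{\pi}) = \int F\, d\mathbb{P}_{AD}^{\boldsymbol{\pi}} = \int F\, d\mathbb{P}_{OD}^{\Phi(\boldsymbol{\pi})} = J_{OD}(\Phi(\boldsymbol{\pi}))$, and the equality holds in the extended sense whenever the expectation is defined, including when it is $\pm\infty$. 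The finite-horizon case needs only the cylinder statement of the theorem. The discounted and average-reward cases need the infinite-path extension, since $\liminf$ or $\limsup$ averages are tail-measurable but not cylinder-measurable.

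For the supremum, $\Phi$ is a bijection by Part 1 of Theorem \ref{theorem:main-equ}, so $\{J_{AD}(\boldsymbol{\pi}) : \boldsymbol{\pi}\in\Pi_{AD}\} = \{J_{OD}(\Phi(\boldsymbol{\pi}))\} = \{J_{OD}(\boldsymbol{\pi}') : \boldsymbol{\pi}'\in\Pi_{OD}\}$. Equal sets have equal suprema, which defines $J^*$. For the optimality claim, $\boldsymbol{\pi}^*$ attains $J^*$ in AD if and only if $J_{OD}(\Phi(\boldsymbol{\pi}^*)) = J_{AD}(\boldsymbol{\pi}^*) = J^*$, and surjectivity of $\Phi$ gives the converse direction.

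I expect the main obstacle to be the measure-theoretic extension step together with well-definedness of $J$. The corollary allows an arbitrary measurable $G$, so the expectation may fail to exist. I would state the result under the convention that $J$ is defined, for example for $G$ bounded below or integrable, and note that both sides are then defined simultaneously because the laws coincide. A secondary check is that $\xi$ depends only on $(s_t,\mathbf{a}_t)$ at effective times, which is exactly what the theorem's process describes.
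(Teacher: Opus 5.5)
Your proposal is correct and follows the same route as the paper. The paper's remark justifies the corollary only by noting that $J$ depends solely on the induced trajectory law, which Theorem~\ref{theorem:main-equ} makes identical under $\Phi$, with the bijection then transferring suprema and optimal policies; your $\pi$-$\lambda$ extension to path space, measurability of $\mathcal{T}\mapsto G(\xi)$, and integrability caveat on $G$ fill in details the paper leaves implicit rather than constituting a different argument.
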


\paragraph{Remarks:}
\begin{itemize}
    \item \textbf{Scope of Equivalence:} This equivalence holds strictly for fixed joint policies and completely characterizes the induced stochastic process. However, this result does \textit{not} claim that the AD and OD regimes remain symmetric during the learning phase, under adaptive policies, or under protocols where agents infer others' delays. As we will empirically demonstrate in Section 5, the practical learning dynamics under standard reinforcement learning algorithms exhibit fundamental asymmetries.
    \item \textbf{Structural Optimality:} Because the expected return $J(\boldsymbol{\pi})$ depends solely on the induced trajectory measure, this corollary is an immediate, structural consequence of Theorem \ref{theorem:main-equ}. The isomorphism inherently guarantees matching optimal values without requiring any additional, regime-specific optimality proofs.
\end{itemize}

\subsection{Mixed Delay Configuration}

\begin{corollary}[Equivalence of Mixed Delay Configurations]\label{corr:eq-mixed}
Let $\mathbf{\rho} = (\rho_1, \dots, \rho_n)$ be a \textbf{Type Configuration Vector}, where $\rho_i \in \{AD, OD\}$ denotes whether agent $i$ is subject to AD or OD.
Let $\Pi_{\mathbf{\rho}}(\mathbf{k})$ be the space of valid joint policies for the heterogeneous mixed-delay system. Under assumptions Assumption \ref{assumption:deterministic-buffer},\ref{assumption:dec-joint},\ref{assumption:no-leakage}, \ref{assumption:init}, the space $\Pi_{\mathbf{\rho}}(\mathbf{k})$ is globally isomorphic to the pure OD space $\Pi_{OD}(\mathbf{k})$.
\end{corollary}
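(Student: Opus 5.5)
The plan is to reuse the two-part structure of the proof of Theorem \ref{theorem:main-equ}, now applied agent by agent with a regime that depends on the agent. The key observation is that Lemmas 1 and 2 are purely local. Lemma 1 holds for any agent $i$ with $\rho_i = AD$, and Lemma 2 for any agent with $\rho_i = OD$. Neither lemma uses the regime of any other agent. Hence, for every agent $i$ and effective time $t \ge k_i$, the effective information set is $h_i^{(t,k_i)}$ whatever $\rho_i$ is. Similarly, Assumption \ref{assumption:init} is stated per agent, so it gives the same initial actions, conditioned on $\mathcal{I}_0$, for $t<k_i$ in either regime.

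\textbf{Step 1 (policy spaces).} By Assumption \ref{assumption:dec-joint}, $\Pi_{\mathbf{\rho}}(\mathbf{k})$ is the Cartesian product of local policy spaces $\{\pi_i : \mathcal{H}_i \to \Delta(\mathcal{A}_i)\}$. The local space of agent $i$ is determined by $h_i^{(t,k_i)}$ alone, so it is the same set for $\rho_i = AD$ and for $\rho_i = OD$. I define $\Phi_{\mathbf{\rho}}(\boldsymbol{\pi}) = \langle \phi_1(\pi_1),\dots,\phi_n(\pi_n)\rangle$. Here $\phi_i$ is the identity-induced bijection on the local policy space, relabelling only how the policy is executed. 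A product of bijections is a bijection onto $\Pi_{OD}(\mathbf{k})$.

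\textbf{Step 2 (trajectory law).} I write the same factorization of $Pr(\mathcal{T}_T\mid\boldsymbol{\pi})$ as in Part 2 of the proof of Theorem \ref{theorem:main-equ}. The environment terms $O(s_t,\mathbf{a}_{t-1})$ and $P(s_t,\mathbf{a}_t)$ are regime-invariant by Assumption \ref{assumption:no-leakage}. Each policy factor $\pi_i(a_{i,t}\mid h_i^{(t,k_i)})$ is identical under $\rho_i$ and under OD. An induction on $t$ then shows that the finite-dimensional laws coincide. This yields isomorphism in the strong sense of the theorem, and also equality of optimal values via Corollary \ref{corr:eq-optimal}.

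\textbf{Main obstacle.} The delicate point is whether the per-agent product factorization remains valid when agents of different types coexist. AD agents have pre-committed buffered actions, and OD agents condition on delayed observations. One must check that no AD agent's buffer leaks into another agent's observation, including an OD agent's delayed observation $o_{j,t-k_j}$. This is exactly what Assumption \ref{assumption:no-leakage} guarantees, since observations depend only on $(s_t,\mathbf{a}_{t-1})$. Consequently, conditionally on the past trajectory, the action of agent $i$ at effective time $t$ is independent of the other agents' actions given $h_i^{(t,k_i)}$. I would argue this conditional independence explicitly by noting two things: each agent's randomization is private, and the only coupling between agents passes through the shared state process, whose law is regime-free. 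The statement of the corollary then follows directly.
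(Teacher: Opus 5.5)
Your proposal is correct and follows essentially the same route as the paper. You use Assumption~\ref{assumption:dec-joint} to split $\Pi_{\mathbf{\rho}}(\mathbf{k})$ into per-agent local policy spaces, note that each of these is the same space $\{\pi_i:\mathcal{H}_i\to\Delta(\mathcal{A}_i)\}$ whatever $\rho_i$ is, and take the product map, where the paper applies $\phi_i$ to AD agents and the identity to OD agents. Your Step~2 and the discussion of cross-type conditional independence spell out the trajectory-preservation claim, which the paper's proof only asserts by appeal to Theorem~\ref{theorem:main-equ}.
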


\begin{proof}
By Assumption \ref{assumption:dec-joint}, the joint policy space decomposes into the Cartesian product of individual policy spaces:
\begin{equation}
    \Pi_{\mathbf{\rho}}(\mathbf{k}) = \bigotimes_{i=1}^n \Pi_{i, \rho_i}(k_i)
\end{equation}
From Theorem \ref{theorem:main-equ}, we established a bijective mapping $\phi_i: \Pi_{i, AD}(k_i) \to \Pi_{i, OD}(k_i)$ based on the equivalence of the observation-action history $h_i^{(t, k_i)}$.
For any agent $i$ where $\rho_i = AD$, we apply the mapping $\phi_i$. For any agent $j$ where $\rho_j = OD$, we apply the identity map $\text{id}_j$.
Construct the global bijection $\Psi$ as the tensor product of these individual mappings:
\begin{equation}
    \Psi = \bigotimes_{i=1}^n \psi_i, \quad \text{where } \psi_i = \begin{cases} \phi_i & \text{if } \rho_i = AD \\ \text{id}_i & \text{if } \rho_i = OD \end{cases}
\end{equation}
This mapping $\Psi: \Pi_{\mathbf{\rho}}(\mathbf{k}) \to \Pi_{OD}(\mathbf{k})$ is a bijection that preserves the induced state-action trajectories, confirming that any mixed-delay problem is reducible to an observation-action pure OD problem.
\end{proof}

\begin{corollary}[Additivity of Simultaneous Delays]\label{corr:eq-add}
Consider a generalized configuration where every agent $i$ can be simultaneously subject to both AD $k_{i,a}$ and OD $k_{i,o}$. We denote the OD configuration vector $\mathbf{k_o}$ and the AD configuration vector $\mathbf{k_a}$. The joint policy space of this system is isomorphic to that of a pure OD system with effective delay vector $\mathbf{K} = \mathbf{k}_a + \mathbf{k}_o$.
\end{corollary}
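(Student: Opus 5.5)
The plan is to reduce the combined-delay agent to an information-set statement, exactly as in Lemmas 1 and 2, and then invoke Theorem \ref{theorem:main-equ} (through its local bijections $\phi_i$) together with the Cartesian-product structure from Assumption \ref{assumption:dec-joint}, as in the proof of Corollary \ref{corr:eq-mixed}.

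\textbf{Step 1: the effective information set.} Fix an agent $i$ with action delay $k_{i,a}$ and observation delay $k_{i,o}$, and consider the action $a_{i,t}$ that takes effect at effective time $t$. Because of the action delay, this action is decided at $\tau = t - k_{i,a}$. Because of the observation delay, at decision time $\tau$ the agent holds only observations up to $\tau - k_{i,o} = t - (k_{i,a}+k_{i,o})$. The agent's memory of its own decisions is not delayed. By the FIFO buffer of Assumption \ref{assumption:deterministic-buffer} and the initialization of Assumption \ref{assumption:init}, its decisions up to $\tau-1$ correspond one-to-one to the effective actions up to $\tau - 1 + k_{i,a} = t-1$. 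This is the same reindexing used in the proof of Lemma 1. The effective information set is therefore
\[
(o_{i,0:t-K_i},\; a_{i,0:t-1}) = h_i^{(t,K_i)}, \qquad K_i = k_{i,a}+k_{i,o}.
\]
By Lemma 2, this is precisely the information set of a pure OD agent with delay $K_i$.

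\textbf{Step 2: the bijection and trajectory laws.} Step 1 shows that the admissible local policy space of agent $i$ in the combined system coincides with the space of maps $\mathcal{H}_i \to \Delta(\mathcal{A}_i)$ defined on histories $h_i^{(t,K_i)}$, which is the OD local policy space $\Pi_{i,OD}(K_i)$. Define $\psi_i$ as the induced identification and set $\Psi = \bigotimes_i \psi_i$. Since each $\psi_i$ is a bijection, $\Psi$ is a bijection between joint policy spaces.

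Trajectory preservation then follows as in Part 2 of Theorem \ref{theorem:main-equ}. The environment factors $O$ and $P$ are regime-independent by Assumption \ref{assumption:no-leakage}. The policy factors are $\pi_i(a_{i,t}\mid h_i^{(t,K_i)})$ in both systems. The trajectory likelihoods therefore coincide for every finite $T$.

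\textbf{Main obstacle: the initialization phase $t < K_i$.} Here the two delay types interact. The first $k_{i,a}$ actions come from the pre-filled buffer. The next $k_{i,o}$ decided actions are chosen while no real observation is yet available, because of the observation delay. For the reindexing in Step 1 to hold, every action $a_{i,t}$ with $t<K_i$ must be generated by $\pi(\cdot\mid \mathcal{I}_0, a_{i,0:t-1})$, which is exactly the OD initialization with delay $K_i$. I would handle this by extending Assumption \ref{assumption:init} to the combined system: fill the buffer from $\mathcal{I}_0$, and let the observation-delayed decisions in the interval $\tau<k_{i,o}$ also condition only on $\mathcal{I}_0$ and past actions. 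I would then check that these two conditions jointly reproduce the OD-$K_i$ initial action law. I would also make explicit that "isomorphic" here means a bijection that preserves trajectory laws, as in Theorem \ref{theorem:main-equ}; this choice yields the optimal-value equivalence by Corollary \ref{corr:eq-optimal}.
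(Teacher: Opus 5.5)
Your proposal is correct and follows essentially the same route as the paper: shift the decision time to $\tau = t-k_{i,a}$, cut the observation window at $\tau - k_{i,o} = t-K_i$, use the undelayed memory of past actions to recover $h_i^{(t,K_i)}$, and then reuse the local bijections $\phi_i$ of Theorem~\ref{theorem:main-equ} with delay $K_i$. Two parts of your write-up are refinements of what the paper leaves implicit rather than departures from it: the explicit trajectory-law argument, and your point that Assumption~\ref{assumption:init} must be extended to cover $t<K_i$ (buffer-filled actions followed by $k_{i,o}$ decisions made before any observation arrives).
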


\begin{proof}
We examine the information available to agent $i$ for an action $a_{i,t}$ effective at time $t$.
\begin{enumerate}
    \item \textbf{Decision Time Shift:} Due to AD $k_{i,a}$, the action must be committed at decision time $\tau = t - k_{i,a}$.
    \item \textbf{Observation Window Shift:} At decision time $\tau$, due to OD $k_{i,o}$, the most recent observation available is $o_{i, \tau - k_{i,o}}$.
\end{enumerate}
Substituting $\tau$, the observation cutoff index becomes:
\begin{equation}
    (\tau - k_{i,o}) = (t - k_{i,a}) - k_{i,o} = t - (k_{i,a} + k_{i,o})
\end{equation}
Since the agent retains internal knowledge of its past decisions $a_{i, 0:t-1}$, the effective information set is exactly the observation-action history $h_i^{(t, K_i)}$ where $K_i = k_{i,a} + k_{i,o}$.

Consequently, the mapping $\phi_i$ from Theorem \ref{theorem:main-equ} applies with delay length $K_i$, establishing the isomorphism to the pure OD space.
\end{proof}

The combination of Theorem \ref{theorem:main-equ} and Corollaries 2-3 implies that any decentralized system characterized by arbitrary combinations of fixed network latencies (observation transmission) and actuation lags (mechanical or processing) can be formally modeled as a standard Dec-POMDP with pure ODs. This allows for the unified application of solution methods designed for OD environments (and conversely for solutions designed for AD).

\subsection{Sufficient Statistics in Transition-Independent Settings}

As established in the previous section, the formal equivalence between AD and OD holds when considering the complete observation-action history for each agent. However, for this equivalence to yield a tractable solution method, we must identify a compact sufficient statistic. In the general multi-agent case, an agent cannot predict the evolution of the global state during the delay interval because it lacks information about the simultaneous actions of teammates. Therefore, we focus on \textbf{Transition Independent (TI)} settings with \textbf{Full Local Observability}.

\begin{definition}[TI-Dec-MDP]
We consider a Transition Independent Dec-MDP where:
\begin{enumerate}
    \item \textbf{Full Local Observability:} The global state space decomposes as $\mathcal{S} = \times_{i\in\mathcal{N}} \mathcal{S}_i$, the product of local state spaces. At $t$ a state is the vector $s_t=\{s_{i,t}\}_{i=1,...n}$. Each agent $i$ perfectly observes its own local state component: 
    \[
    o_{i,t} = s_{i,t}
    \]
    \item \textbf{Independence:} The dynamics factorize locally, defining a local transition function $P_i$ for each local state space $\mathcal{S}_i$ and local action space $A_i$ such as: 
    \[
    P(\mathbf{s}' \mid \mathbf{s}, \mathbf{a}) = \prod_{i=1}^n P_i(s_i' \mid s_i, a_i)
    \]
\end{enumerate}
\end{definition}

\begin{definition}[Local Augmented Information State]
Let $k_i$ be the delay length for agent $i$. The \textit{Local Augmented State} contains the \textbf{last observed state} and the \textbf{$k_i$-step sequence of last decided actions}. We define it based on the decision time:
\begin{itemize}
    \item \textbf{In OD (Decision at effective time $t$):} The agent observes the delayed state and remembers the actions that have been executed since:
    \[ \tilde{s}_{i,t}^{OD} \triangleq \langle s_{i, t-k_i}, \quad \mathbf{a}_{i, t-k_i : t-1} \rangle \]
    \item \textbf{In AD (Decision at time $\tau$, effective at $t = \tau + k_i$):} The agent observes the current state and remembers the past decisions currently pending in the buffer:
    \[ \tilde{s}_{i,\tau}^{AD} \triangleq \langle s_{i, \tau}, \quad \mathbf{a}_{i, \tau-k_i : \tau-1} \rangle \]
\end{itemize}

We denote the joint augmented state-space $\tilde{\mathcal{S}}$ and its local subset for an agent $i$, $\tilde{\mathcal{S}_i}$. 
\end{definition}

Then, we derive a practically useful result by giving a sufficient local augmented state for learning an optimal joint policy in TI Dec-MDP.

\begin{theorem}[Sufficiency of the Augmented State under Transition Independence]\label{theorem:independent-sufficient}
For any TI-Dec-MDP with delay configuration $\mathbf{k}$, the local augmented state $\tilde{s}_{i}$ is a sufficient statistic for optimal local control. There exists an optimal joint policy $\boldsymbol{\pi}^*$ composed of independent local policies that depend strictly on the local augmented state:
\begin{equation}
    \pi_i : \tilde{\mathcal{S}}_i \to \Delta(A_i) 
\end{equation}
\end{theorem}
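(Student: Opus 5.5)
The plan is to reduce the problem through Theorem~\ref{theorem:main-equ} and Corollary~\ref{corr:eq-mixed}, and then run a best-response replacement argument one agent at a time. By Theorem~\ref{theorem:main-equ} and Corollary~\ref{corr:eq-optimal}, it suffices to work in the pure OD regime with history-dependent local policies $\pi_i(\cdot\mid h_i^{(t,k_i)})$. There, $\tilde s^{OD}_{i,t}$ and $\tilde s^{AD}_{i,\tau}$ contain the same objects under the index shift $\tau=t-k_i$, so the AD statement follows from the OD one. By the definition of $\Pi_{OD}(\mathbf{k})$ as a product of local history-dependent policy spaces (Assumption~\ref{assumption:dec-joint}), some optimal joint policy $\boldsymbol{\pi}^*$ exists. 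For infinite horizon I would take an $\epsilon$-optimal one, or argue by compactness for finite spaces. I will then modify $\boldsymbol{\pi}^*$ agent by agent, replacing $\pi_i^*$ with a policy on $\tilde{\mathcal S}_i$ and showing that the return does not decrease at any step.

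The first step is a decoupling lemma. Assume the initial state distribution factorizes, $Pr(s_0)=\prod_i Pr(s_{i,0})$; I will state this explicitly since the definition does not impose it. Under TI, full local observability, and decentralized policies, the local processes $(s_{i,t},a_{i,t})_{t\ge0}$ are then mutually independent across agents. I would prove this by induction on $t$ using the trajectory factorization from the proof of Theorem~\ref{theorem:main-equ}. Each agent's policy term depends only on $h_i$, which consists of its own local states and actions, and each transition factor $P_i$ depends only on $(s_i,a_i)$. Hence the joint law of the trajectory is a product of per-agent laws.

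The second step freezes $\boldsymbol{\pi}_{-i}$ and considers agent $i$'s best-response problem. By the decoupling lemma, the expected reward at time $t$ equals $\mathbb{E}[R(s_t,\mathbf a_t)] = \mathbb{E}[\bar R_{i,t}(s_{i,t},a_{i,t})]$, where $\bar R_{i,t}(s_i,a_i)=\mathbb{E}_{-i}[R((s_i,s_{-i,t}),(a_i,\mathbf a_{-i,t}))]$ and the expectation over the other agents does not depend on $\pi_i$. Agent $i$ therefore faces a single-agent MDP with local dynamics $P_i$, observation delay $k_i$, and time-indexed reward $\bar R_{i,t}$. Here I apply the classical augmented-state argument of \cite{bertsekas1987dynamic}. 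By the Markov property of $P_i$, the conditional law of $(s_{i,t-k_i+1},\dots,s_{i,t})$ given $h_i^{(t,k_i)}$ depends only on $\langle s_{i,t-k_i},\mathbf a_{i,t-k_i:t-1}\rangle=\tilde s^{OD}_{i,t}$. Moreover, $\tilde s_{i,t+1}$ is generated from $(\tilde s_{i,t},a_{i,t})$ by a kernel built from $P_i$. So $\tilde s_i$ is an information state for an undelayed MDP whose expected reward is $\mathbb{E}[\bar R_{i,t}(s_{i,t},a)\mid\tilde s_{i,t}]$. Standard MDP theory then gives an optimal best response that is Markov in $\tilde s_{i,t}$, and its value is at least that of $\pi_i^*$. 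Applying this replacement for $i=1,\dots,n$ keeps the joint policy optimal, and at each stage the other agents' policies already satisfy the hypotheses of the decoupling lemma.

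I expect the main obstacle to be the time index. Because $\bar R_{i,t}$ depends on $t$ through the other agents' marginals, and because finite horizons break stationarity, the best response is in general Markov in $(\tilde s_{i,t},t)$ rather than in $\tilde s_{i,t}$ alone. I would handle this first by treating $t$ as implicitly available, since each agent knows the clock, which is the standard TI-Dec-MDP convention of Goldman and Zilberstein. I would then state stationarity in $\tilde s_i$ only for the infinite-horizon discounted case with stationary co-players, where it is obtained by a fixed-point argument over the stationary joint distribution. A secondary point is the initialization phase $t<k_i$. There the augmented state must be padded using $\mathcal I_0$ in accordance with Assumption~\ref{assumption:init}, so that the same argument applies from $t=0$.
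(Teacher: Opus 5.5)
\paragraph{How your route differs from the paper's.} Your route is genuinely different from the paper's and, in substance, more complete. The paper proves only a filtering step. In OD it uses the Markov property of $P_i$ to show $Pr(s_{i,t}\mid\text{history}_t)=Pr(s_{i,t}\mid\tilde s^{OD}_{i,t})$. In AD it argues directly from the FIFO buffer of Assumption~\ref{assumption:deterministic-buffer}, not through Theorem~\ref{theorem:main-equ}, to get $Pr(s_{i,\tau+k_i}\mid\text{history}_\tau)=Pr(s_{i,\tau+k_i}\mid\tilde s^{AD}_{i,\tau})$. It then simply asserts the reduction to an MDP over $\tilde{\mathcal S}_i$.

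The direct AD argument has one advantage: it does not need the warm start of Assumption~\ref{assumption:init}, which your detour through Theorem~\ref{theorem:main-equ} imports. Otherwise the paper's filtering step is just one ingredient of your best-response argument. What the paper never justifies is why predicting one's \emph{own} local state is enough when the shared reward $R(s,\mathbf a)$ couples the agents. Your decoupling lemma, the induced reward $\bar R_{i,t}$, and the agent-by-agent replacement supply exactly that.

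\paragraph{Your added hypotheses are necessary.} Both hypotheses you add are needed.
\begin{itemize}
\item Product initial law. Take $\mathbf k=(0,0)$ and $R=1$ iff $(s_1,a_2)\in\{(A,x),(B,y)\}$. Freeze agent~1's state, send agent~2 from $A'$ or $B'$ to an absorbing state $D$, and draw $s_0\in\{(A,A'),(B,B')\}$ equiprobably. Every policy on $\tilde{\mathcal S}_2$, even with a clock, earns $1/2$ per step in expectation after $t=0$. Remembering $s_{2,0}$ earns $1$.
\item Time index. Finite horizons already force it.
\end{itemize}

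\paragraph{The step that fails.} Your proposed recovery of stationarity in the infinite-horizon discounted case with stationary co-players does not work. Stationary co-players need not produce time-invariant marginals of $(s_{-i,t},\mathbf a_{-i,t})$. So $\bar R_{i,t}$ stays time-dependent, and there is no stationary distribution for a fixed-point argument to use.

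Here is a counterexample with $\mathbf k=(0,0)$ and the same reward. Agent~1 has one action and deterministic dynamics $A\to B\to A$ with $s_{1,0}=A$. Agent~2 has the single state $C$. Alternating $x,y,x,\dots$ earns $1/(1-\gamma)$. Every $\pi_2:\tilde{\mathcal S}_2\to\Delta(\mathcal A_2)$ earns at most $1/(1-\gamma^2)$. For general $k_2$, give agent~1 a cycle of period $k_2+2$, with $x$ rewarded at one phase and $y$ at the others; this defeats the action memory in $\tilde s_2$.

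What your argument actually establishes is $\pi_i:\tilde{\mathcal S}_i\times\mathbb N\to\Delta(\mathcal A_i)$. In the infinite-horizon discounted case, stationarity in $\tilde s_i$ alone needs extra structure. An additively separable reward $R=\sum_i R_i(s_i,a_i)$ suffices, since then $\bar R_{i,t}=R_i+c_t$ with $c_t$ independent of $\pi_i$. The statement as literally written is therefore not provable in this generality, and the paper's proof does not address this either.
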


\begin{proof}
To prove sufficiency for optimal local control, we must show that $\tilde{s}_{i}$ contains all necessary and sufficient information to predict the local state distribution at the moment the chosen action takes effect. By the definition of a TI-Dec-MDP, local state transitions depend exclusively on local states and local action. The actions of other agents do not interfere.

\paragraph{Case 1: Observation Delay (OD)}
At effective time $t$, agent $i$ must select an action $a_{i,t}$ that takes effect immediately. The available information is $\tilde{s}_{i,t}^{OD} = \langle s_{i, t-k_i}, \mathbf{a}_{i, t-k_i : t-1} \rangle$. Because the environment is transition independent, the evolution of the local state from step $t-k_i$ to $t$ is strictly a function of the state at $t-k_i$ and the sequence of actions ($\mathbf{a}_{i, t-k_i : t-1}$) executed during the interval $[t-k_i,t]$. 
By the Markov property, the true current state $s_{i,t}$ is conditionally independent of all older history:
\[
Pr(s_{i,t} \mid \text{history}_t) = Pr(s_{i,t} \mid s_{i, t-k_i}, \mathbf{a}_{i, t-k_i : t-1}) = Pr(s_{i,t} \mid \tilde{s}_{i,t}^{OD})
\]

\paragraph{Case 2: Action Delay (AD)}
At decision time $\tau$, agent $i$ must select an action that will take effect at future time $t = \tau + k_i$. To evaluate the expected value of this decision, the agent must project the state to the effective time $t$. The available information is $\tilde{s}_{i,\tau}^{AD} = \langle s_{i, \tau}, \mathbf{a}_{i, \tau-k_i : \tau-1} \rangle$. 
Thus, the actions residing in the augmented state memory $\tilde{s}_{i,\tau}$ (decided between $\tau-k_i$ and $\tau-1$) are exactly the actions that will execute during the interval $[\tau, \tau+k_i-1]$ by Assumption \ref{assumption:deterministic-buffer}. By transition independence, the state at execution time $s_{i, \tau+k_i}$ depends strictly on the currently observed state $s_{i,\tau} = o_{i,\tau}$ and these pending actions. Thus:
\[
Pr(s_{i, \tau+k_i} \mid \text{history}_\tau) = Pr(s_{i, \tau+k_i} \mid s_{i, \tau}, \mathbf{a}_{i, \tau-k_i : \tau-1}) = Pr(s_{i, \tau+k_i} \mid \tilde{s}_{i,\tau}^{AD})
\]

\paragraph{Conclusion}
In both regimes, the local augmented state provides the observation and the $k_i$-step sequence of actions necessary to project the environment dynamics forward to the moment where the action takes effect. Consequently, $\tilde{s}_{i,t}$ acts as a sufficient statistic for the local state, allowing the exact reduction of the delayed TI-Dec-MDP to a standard MDP over the augmented state space $\tilde{\mathcal{S}}_i$.
\end{proof}

This matches the state-augmentation method proposed by \cite{katsikopoulos2003markov} for the single-agent setting, and effectively provides a tractable method in multi-agent settings for transition-independent environments. This method ensures the reduction to MDP only in a fully observable and transition-independent setting. We will show empirically in the next section that the learning can be significantly different when using this augmented state in environments with coupled transitions.

\section{Practical Learning Gap}\label{section:practical-gap}

We derived results regarding the formal equivalence of OD and AD problems for decentralized joint policies under classical assumptions of multi-agent systems. These results are formally interesting regarding the theory of Dec-POMDPs, as they extend classical results from single-agent reinforcement learning literature to the multi-agent case. The isomorphism gives a strict trajectory equivalence for joint policy evaluation, but it does not give any results regarding learning dynamics in practice. In this section, we discuss the differences between OD and AD in learning problems. First, we highlight that the AD regime introduces an additional reward delay as compared to OD, which consequently induces credit assignment challenge in TD-Learning requiring an additional realignment mechanism to retrieve equivalent learning dynamics. Secondly, we empirically show on a simple example how equivalence fail when using local augmented state described in Theorem \ref{theorem:independent-sufficient} in transition-dependent environment. Then, we briefly study the implications of the initialization assumption, highlighting its necessity and showing how it breaks symmetry of delay regimes. Finally, despite identified asymmetries in learning dynamics, we empirically showcase that even in complex environments, zero-shot transfer learning can be performed between OD and AD regimes, illustrating a practical use case for MARL on real delayed systems.

\subsection{Experimental Setup}

We evaluate our theoretical findings using a cooperative Multi-Agent Grid World navigation task, a foundational benchmark domain for sequential decision-making \citep{sutton2018reinforcement, oliehoek2016concise}. The environment is a grid with two agents. The objective for the agents is to simultaneously occupy the two targets they are assigned to end the episode and receive a positive reward. Agents observe their position on the grid, eventually with a delay and at each step, agent $i$ can choose action as $a_i(t) \in \{\text{up},\text{down},\text{left}, \text{right} \}$. Each step gives a negative reward $r(t) = -1$ and reaching both target ends the episode and gives $r(t) = 10$. In addition, to introduce coordination between agents, we place a wall in the middle of the grid with a single corridor (see Figure \ref{fig:env-illustration}). 

The environment can then be instantiated in two different settings:
\begin{itemize}
    \item \textbf{Transition-Independent:} Agents act fully independently. They can occupy the same position on the grid. In this setting, multi-agent coupling only occurs through the shared reward.
    \item \textbf{Coupled-Transition:} Agents cannot occupy the same position nor be within their respective collision radius (set to $1$ in our experiments). When agents attempt to enter each other's collision radius, the move cannot be done. In this setting, each agent's transition model depends on others' actions. In this setting, agents cannot simultaneously cross the corridor.
\end{itemize}

\begin{figure}[htp]
    \centering
    \includegraphics[width=0.5\linewidth]{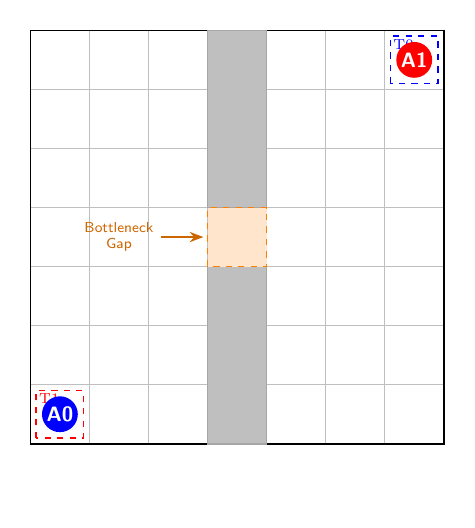}
    \caption{\textbf{Illustration of the Cooperative Grid World Environment at $t=0$.}}
    \label{fig:env-illustration}
\end{figure}

Detailed algorithms for the environment's transition, collision mechanism, and delay wrapper are given in Appendix \ref{suppl:exp-setup}.

\subsection{Explaining Learning Asymmetry in TD-Learning}\label{subsec:td-learning}

To highlight differences that exist when learning under the two regimes, we look at a standard method in RL: Temporal Difference (TD) learning. This is at the basis of Q-Learning and SARSA algorithms \cite{sutton2018reinforcement}. For simplicity of derivation, we consider a single-agent MDP here but the results apply to dec-POMDP. For Q-Learning, in a standard, undelayed MDP, the TD error is calculated as:
\begin{equation}\label{eq:TD-update}
    \delta_t = \underbrace{r_{t+1} + \gamma \max_{a'} Q(s_{t+1}, a')}_{\text{Target}} - \underbrace{Q(s_t, a_t)}_{\text{Estimate}}
\end{equation} 
with $Q$ the state-action value function, representing the expected cumulative discounted reward of taking action $a_t$ in state $s_t$ and following the optimal policy thereafter.

This update relies on direct temporal causality: the state and action pair $(s_t, a_t)$ that induced a transition is immediately associated with the resulting reward $r_{t+1}$. By explicitly writing the Q-value updates for both delay regimes, one can clearly see how AD breaks this causality. We explicitly derive the differences in TD-updates with OD and AD regimes.\\

Let $s_t$ be the state of the environment at time $t$. To clarify the temporal dynamics, we introduce a notation separating decision time and effective time: let $a^i_{(\tau,t)}$ denote agent $i$'s local action, \textit{decided} at time $\tau$ and \textit{effective} at time $t$. The joint action \textit{effective} at $t$ is denoted simply as $a_t$. 

Based on the local minimal augmented state established in Theorem \ref{theorem:independent-sufficient}:
\begin{itemize}
    \item \textbf{In OD:} $\tilde{s}_t = (o_{t-k}, a^i_{(t-k,t-k)}, \dots, a^i_{(t-1,t-1)})$. The observation reflects a past state, and the buffer contains the history of past actions that were decided and effective instantaneously up to $t-1$.
    \item \textbf{In AD:} $\tilde{s}_t = (o_{t}, a^i_{(t-k,t)}, \dots, a^i_{(t-1,t+k-1)})$. The observation reflects the current state, and the action buffer contains previously \textit{decided} actions that will take effect sequentially from step $t$ to $t+k-1$.
\end{itemize}

Let $\alpha$ be the learning rate. Using this explicit temporal notation, we can rewrite the local TD-updates for each regime:

\paragraph{OD - Causality with Immediate Reward Collection:}
The standard TD update correctly pairs the cause with its immediate effect:
\begin{equation}\label{eq:q-learning-od}
    Q(\tilde{s}_t, a^i_{(t,t)}) \leftarrow Q(\tilde{s}_t, a^i_{(t,t)}) + \alpha \left[ r_{t+1} + \gamma \max_{a'} Q(\tilde{s}_{t+1}, a') - Q(\tilde{s}_t, a^i_{(t,t)}) \right]
\end{equation}

\paragraph{AD Naive - Breaking Causality:}
If a standard TD algorithm is applied naively to an environment with an instantaneous reward, it erroneously updates the value of an action that will take effect in the future using a current reward. At decision time $t$, agent $i$ decides action $a^i_{(t,t+k)}$ using observation $\tilde{s}_t$. However, the environment transitions based on $\mathbf{a}_{(t-k,t)}$ (whose local component was decided at $t-k$). The naive TD update is:
\begin{equation}\label{eq:q-learning-naive}
    Q(\tilde{s}_t, a^i_{(t,t+k)}) \leftarrow Q(\tilde{s}_t, a^i_{(t,t+k)}) + \alpha \left[ r_{t+1} + \gamma \max_{a'} Q(\tilde{s}_{t+1}, a') - Q(\tilde{s}_t, a^i_{(t,t+k)}) \right]
\end{equation}

Here, $a^i_{(t,t+k)}$ is a component of the joint action $\mathbf{a}_{t,t+k}$ and will not be effective until $t+k$, at which point it will generate reward $r_{t+k+1}$. Therefore, the true causal reward for the decision made at $t$ will not be observed until step $t+k+1$. Structurally, this is identical to an environment where actions are immediately effective, but their corresponding reward is delayed by $k$ steps.

\paragraph{AD Realigned - Restoring Causality:}
To recover the causal chain within the augmented state space, the TD update for a decision made at $t$ must be suspended until it becomes effective. By buffering the agent's transition tuple $(\tilde{s}_t, a^i_{(t,t+k)}, \tilde{s}_{t+1})$ for $k$ steps, we arrive at effective step $t+k$. At this step, the decision $a^i_{(t,t+k)}$ takes effect on the environment, yielding its true causal reward $r_{t+k+1}$. \\
Shifting the time index backward for clarity: at any current effective step $t$, the action currently taking effect is $a_t$, whose $i$-th component is $a^i_{(t-k,t)}$, generating $r_{t+1}$. We retrieve the historical, buffered transition $(\tilde{s}_{t-k}, a^i_{(t-k,t)}, \tilde{s}_{t-k+1})$ and apply the delayed update:
\begin{equation}\label{eq:q-learning-realigned}
    \begin{split}
        Q(\tilde{s}_{t-k}, a^i_{(t-k,t)}) \leftarrow & ~ Q(\tilde{s}_{t-k}, a^i_{(t-k,t)}) \\
        & + \alpha \left[ r_{t+1} + \gamma \max_{a'} Q(\tilde{s}_{t-k+1}, a') - Q(\tilde{s}_{t-k}, a^i_{(t-k,t)}) \right]
    \end{split}
\end{equation}

This realigned update perfectly mirrors the causal structure of the OD equation. It pairs the augmented state $\tilde{s}_{t-k}$ and the corresponding decision $a^i_{(t-k,t)}$ with the exact subsequent augmented state $\tilde{s}_{t-k+1}$ and its direct consequence $r_{t+1}$, effectively eliminating the credit assignment error. \\

Finally, we derive the following proposition:
\begin{proposition}[Reward Delay in TD-Learning with Action Delay]\label{prop:td-learning-asymetry}
    Let an agent interact with an AD environment with a delay $k$. A standard 1-step Temporal Difference (TD) update (Equation \ref{eq:TD-update}) applied naively at decision time $t$ introduces a $k$-step causal mismatch (Equation \ref{eq:q-learning-naive}), rendering the algorithmic credit assignment equivalent, in terms of the sequence of TD updates, to an OD regime coupled with a strict $k$-step reward collection delay. Furthermore, buffering the transition tuple and realigning the TD update to the effective time (Equation \ref{eq:q-learning-realigned}) completely resolves this causal mismatch, restoring the exact state-action-reward credit assignment of the OD regime (Equation \ref{eq:q-learning-od}).
\end{proposition}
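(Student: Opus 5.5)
The plan is to compare the streams of TD-update tuples generated by each scheme, indexed by the decision they credit. This reduces both claims to an identification of sequences; no convergence analysis is needed. I would define a TD-update as a tuple $(x, u, r, x')$, meaning that $Q(x,u)$ is moved toward $r + \gamma \max_{a'} Q(x',a')$ with step size $\alpha$. Two learning procedures have the same credit assignment when the sequences of tuples they apply, matched by the decision they update, coincide. This is the sense of ``equivalent, in terms of the sequence of TD updates'' in the statement.

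\textbf{Naive AD.} I would read off from Equation~\ref{eq:q-learning-naive} that the tuple applied at decision time $t$ is $(\tilde{s}_t, a^i_{(t,t+k)}, r_{t+1}, \tilde{s}_{t+1})$. By Assumption~\ref{assumption:deterministic-buffer} (FIFO), the action decided at $t$ becomes effective exactly at $t+k$. Since $r_{t+1} = R(s_t, \mathbf{a}_t)$, this reward is a function of the action decided at $t-k$, not of $a^i_{(t,t+k)}$. The causal reward of the decision taken at $t$ is $r_{t+k+1} = R(s_{t+k}, \mathbf{a}_{t+k})$. Hence the reward slot in each naive tuple is shifted by exactly $k$ steps relative to the causal one.

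Next I would write the OD learner under a $k$-step reward collection delay. Under OD the action decided at $t$ takes effect at $t$, so its causal reward is $r_{t+1}$. With a $k$-step reward delay, at step $t+k$ the learner only receives $r_{t+1}$, and it pairs whatever it has just received with its current transition. The resulting tuple is $(\tilde{s}_{t+k}, a_{t+k}, r_{t+1}, \tilde{s}_{t+k+1})$. Equivalently, the decision at step $t$ is paired with $r_{t-k+1}$, which is exactly a $k$-step offset between the update's state-action pair and its causal reward, the same offset as in the naive AD update. By the information identity of Lemmas~1 and~2, at each step both learners condition on the same information $h^{(t+k,k)}$ for the action being credited. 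Therefore the two tuple streams, indexed by decision, have the same structure.

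\textbf{Realigned AD.} Buffering for $k$ steps means the update applied at effective step $t$ is $(\tilde{s}_{t-k}, a^i_{(t-k,t)}, r_{t+1}, \tilde{s}_{t-k+1})$, as in Equation~\ref{eq:q-learning-realigned}. The decision in this tuple is effective at $t$, so $r_{t+1}$ is its causal reward. Re-indexing by decision time $\tau = t-k$ gives $(\tilde{s}_\tau, a^i_{(\tau,\tau+k)}, r_{\tau+k+1}, \tilde{s}_{\tau+1})$: state and action at decision, the reward generated when that action takes effect, and the next augmented state. This has the same form as the OD tuple in Equation~\ref{eq:q-learning-od}, where the decision at $t$ is paired with the reward it generates and with the next augmented state. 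Combined with the trajectory-law equality of Theorem~\ref{theorem:main-equ} under the bijection $\Phi$, the realigned updates applied to the AD stream have the same joint law as the OD updates applied to the OD stream. They differ only by a deterministic lag of $k$ wall-clock steps in when each update is applied.

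\textbf{Main obstacle.} The delicate step is making ``identical credit assignment'' precise despite that lag. In on-policy or greedy learning, $Q$ at decision $\tau$ has absorbed only the updates up to $\tau-k$, so the exact iterates differ between the two schemes. I would first claim only equality of the update map, that is, which state-action pair is credited with which reward and next state, together with equality in distribution of the tuple sequences. I would then note that the lag acts as a bounded staleness, like delayed reward collection in the related literature, rather than as a credit error. If a stronger iterate-wise statement were wanted, I would compare against OD with the same $k$-step update lag, where both learners apply identical updates with identical staleness.
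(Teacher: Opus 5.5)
Your proposal is correct and follows essentially the paper's own derivation. You write the OD, naive AD, and realigned AD updates as tuples indexed by decision and effective time, and read off that the naive update credits the decision at $t$ with $r_{t+1}$ instead of its causal reward $r_{t+k+1}$. You then identify that stream with an OD learner under a $k$-step reward delay through the shared information $h^{(t+k,k)}$, and re-index the buffered update to recover Equation~\ref{eq:q-learning-od}; your tuple-stream definition simply makes the paper's informal ``structurally identical'' precise.

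The one overreach is invoking Theorem~\ref{theorem:main-equ} for equality in law of the tuple streams. That only holds for a fixed behavior policy: the theorem is stated for fixed joint policies, and your own staleness remark shows that adaptive behavior breaks it. Exact equality over finite episodes also needs the boundary flushing the paper introduces later. Neither point affects the result, since the proposition only asserts equality of the credit-assignment map, which your argument establishes.
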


Figure \ref{fig:alignment} illustrates how we can practically retrieve OD's learning efficiency by applying the described effective-time realignment.

\begin{figure}[htp]
    \centering
    \includegraphics[width=0.66\linewidth]{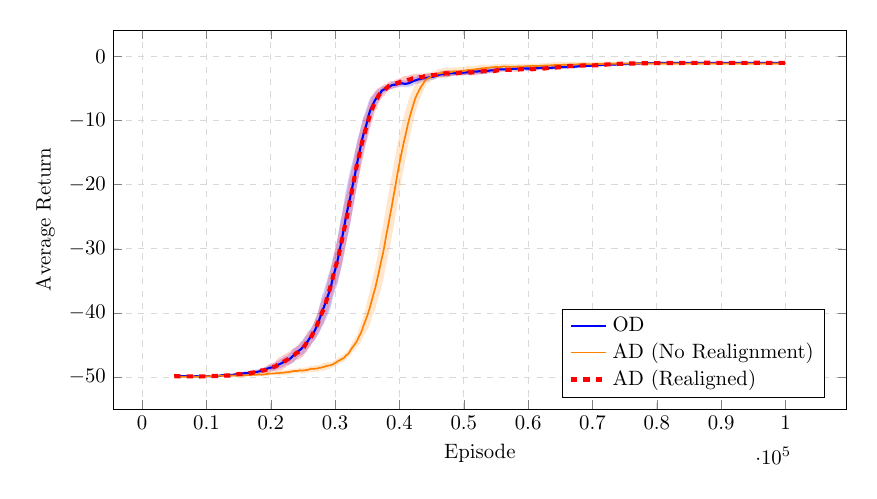}
    \caption{\textbf{Effective-Time Alignment retrieving the OD learning regime:} By buffering observation-action pairs for each agent and waiting for the reward  induced by an action, we realign learning under AD to the OD regime. Importantly, this showcases that the difficulty of performing TD-Learning in the AD regime is solely due to the reward collection delay induced by the problem as compared to the OD regime.}
    \label{fig:alignment}
\end{figure}

 To summarize and provide a better intuition: while AD delays the execution of control and OD delays the perception of the state, from the perspective of a Temporal Difference (TD) learning algorithm, AD is structurally equivalent to OD combined with a strict $k$-step reward collection delay. A direct consequence of Proposition \ref{prop:td-learning-asymetry} is that restoring causality in TD-Learning under AD requires more memory than under OD. Indeed, in AD, one needs to store the $k$-sequence of observation-action pairs to properly update the Q-table when the reward generated by a decision is received. Thus, it requires a memory overhead of $\mathcal{O}(k)$. Under OD, this reward is immediately available to update the Q-table. It should be noted that any additional reward delay in the system would create or increase the credit assignment issue for both regimes and is additive to the AD's specific reward delay. We use the realigned TD Learning methods for AD in the rest of the experiments.

 \paragraph{Remarks:} \cite{katsikopoulos2003markov} implicitly treat this asymmetry in Q-Learning by assuming a reward delay larger than the $OD$ (or $AD$). Doing that, they are actually introducing a reward delay to the $OD$ regime to compensate for the pre-existing delay supposed by the $AD$ regime and described in this section. Our algorithmic approach does the inverse, effectively realigning AD regimes through buffering without assuming additional reward delay.

 \paragraph{On-Policy Algorithms:} 
 
 While the effective-time realignment restores the causal chain for off-policy algorithms like Q-Learning, it introduces a structural "policy drift" in on-policy algorithms like SARSA. If we apply the realignment to SARSA at effective time $t$, the TD target requires the action taking effect at $t+1$, which was buffered at $t-k+1$. The update becomes:
 $$Q_t(\tilde{s}_{t-k}, a^i_{(t-k,t)}) \leftarrow Q_t(\tilde{s}_{t-k}, a^i_{(t-k,t)}) + \alpha \left[ r_{t+1} + \gamma Q_t(\tilde{s}_{t-k+1}, \underbrace{a^i_{(t-k+1,t+1)}}_{\sim \pi_{Q_{t-k+1}}}) - Q_t(\tilde{s}_{t-k}, a^i_{(t-k,t)}) \right]$$
 Here, $a^i_{(t-k+1,t+1)}$ was sampled using a past policy $\pi_{t-k+1}$, but is evaluated using the current value function $Q_t$. This forces the on-policy update to consume off-policy data without correction, introducing a learning bias that scales with the delay $k$. In tabular settings with sparse updates, this drift is negligible, but it becomes a significant bottleneck for function approximators. This is of interest for future work.

\subsection{The Role of Reward Delay: Separating Policy Admissibility from Credit Assignment}

Our theoretical equivalence between delay regimes fundamentally relies on defining local policies as functions of the local state-action history (Definition \ref{def:local-policy}). Because the proof of equivalence relies exclusively on the bijection between the sets of admissible policies, ensuring any effective action is chosen using the exact same information set, the learning dynamics and reward collection mechanisms are intentionally decoupled from the theoretical proof. Consequently, our baseline Observation Delay (OD) regime assumes immediate reward collection.

In real systems, however, reward signals are often intrinsically coupled with state observations. This leads to the standard assumption in the literature that an observation delay inherently induces an equivalent reward delay ($k_{reward} \geq k_{obs}$). This ensures the reward $R(s_{t-k},\mathbf{a}_{t-k})$ only becomes available when the observation induced by $s_{t-k}$ reaches the agent \citep{katsikopoulos2003markov}. While we acknowledge this is a standard formulation for deployed systems, we argue that learning a policy under OD with immediate rewards has profound practicality in modern Reinforcement Learning, particularly in Sim-to-Real paradigms. In such setups, a policy is trained within a simulator where rewards can be computed and delivered instantaneously, even while the simulated observation stream is artificially delayed to match the effective OD of the targeted real-world deployment.

Furthermore, introducing a reward delay does not invalidate the structural equivalence of the delay regimes, provided the information available for action selection remains clearly defined. As discussed in the literature \citep{liotet2022delays}, reward delay is fundamentally a credit assignment problem, and therefore a matter of pairing the correct transition tuple with its corresponding induced reward. Whether an agent learns a policy with immediate rewards or delayed rewards only induces credit assignment challenge, which can be mitigated algorithmically (e.g., via transition buffering) as demonstrated in Section \ref{subsec:td-learning}.

To empirically validate that reward delay does not alter the underlying equivalence, we introduce a strict reward delay $k_{reward} = k_{obs} = k$ to the OD environment. The reward associated with a state is now received exactly when that state is observed. We evaluate three distinct algorithmic treatments of this delayed reward stream:

\begin{itemize}
    \item \textbf{Naive (Truncated):} The agent updates the Q-table using the current delayed observation and action whenever a delayed reward is received. There is no buffering mechanism. Consequently, at the episodic boundary, the final $k$ rewards (including the terminal goal reward) are truncated by the environment's termination and are never processed by the learning agent, leading to severe sample inefficiency and blindness to the goal.
    \item \textbf{Flushed (Isomorphic Construction):} To maintain a symmetric learning trajectory, the agent processes the delayed rewards but forces a subjective continuation at the end of an episode. Upon reaching the final effective step $T$, the agent stops acting but extracts the remaining unobserved states ($\tilde{s}_{T-k+1} \dots \tilde{s}_T$) from the environment's buffer. By pairing these true states and the remaining delayed rewards, this flush mechanism allows the OD agent to reconstruct the acausal credit assignment tuples of the Naive AD regime.
    \item \textbf{Reward Aligned (Causal Restoration):} Similar to the effective-time realignment used for AD, the agent maintains an internal transition buffer. Subjective state-action tuples are held for $k$ steps and only updated when their true causally induced reward arrives from the delayed pipeline. To ensure no data is lost, this buffer is fully flushed at the end of the episode, perfectly restoring the optimal learning trajectory of the baseline OD regime.
\end{itemize}

\begin{figure}[htp]
    \centering
    \includegraphics[width=0.66\linewidth]{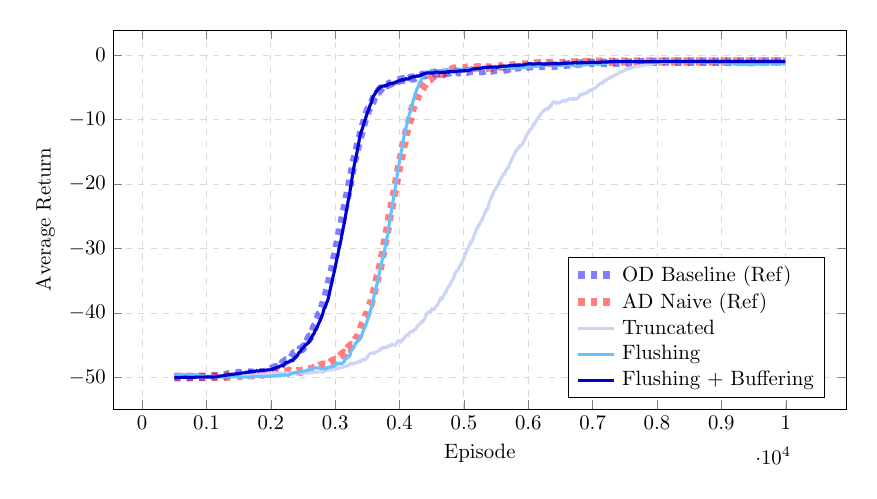}
    \caption{\textbf{Empirical validation of algorithmic isomorphism under delayed reward collection.} Learning trajectories in an OD environment subjected to a strict $k$-step reward delay. The naive \textit{Truncated} approach suffers severe degradation due to lost terminal goal rewards. The \textit{Flushing} mechanism processes unseen boundary states to perfectly reconstruct the acausal learning trajectory of Naive Action Delay (AD). Conversely, the \textit{Flushing + Buffering} approach achieves perfect causal restoration, buffering transitions to match their delayed rewards and successfully recovering the optimal performance of the OD Baseline.}
    \label{fig:od-delays}
\end{figure}

\paragraph{Remarks on Episodic Boundaries:} 
Interestingly, the necessity of flushing the remaining observations and rewards at the end of a finite-horizon episode acts as the terminal counterpart to the \textit{Warm Start} initialization assumed for the AD regime. However, a fundamental distinction exists in their theoretical purpose in this work: the \textit{Warm Start} initialization at $t=0$ is a formal prerequisite necessary to construct the bijection of admissible policies between regimes, whereas the terminal flush at $t=T$ is strictly an algorithmic intervention necessary to rescue uncollected rewards and reconstruct the causal learning trajectory.

\subsection{Transition Independence and Sufficient Statistics}

The minimal augmented state $\tilde{s}_i$ derived in Theorem \ref{theorem:independent-sufficient} is theoretically sufficient only under Transition Independence and full local observability. Under these strict conditions, standard RL algorithms can be applied directly, yielding symmetric learning dynamics between the OD and AD regimes with realignment (Figure \ref{fig:sub_indep}). However, this symmetry breaks down when coupling is introduced.

\begin{figure}[htp]
    \centering
    \begin{subfigure}[b]{0.45\textwidth}
        \centering
        \includegraphics[width=\linewidth]{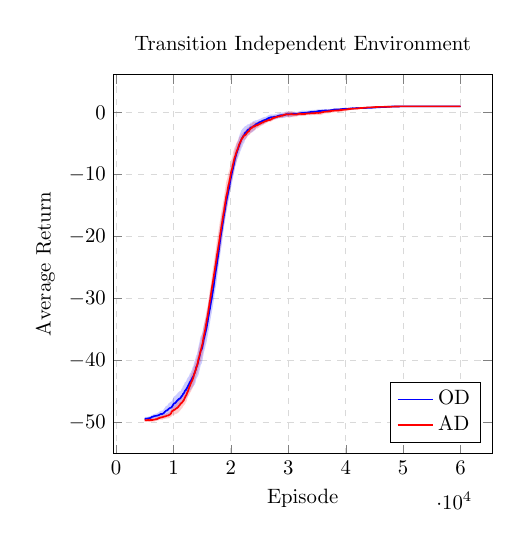}
        \caption{}
        \label{fig:sub_indep}
    \end{subfigure}
    \hfill
    \begin{subfigure}[b]{0.45\textwidth}
        \centering
        \includegraphics[width=\linewidth]{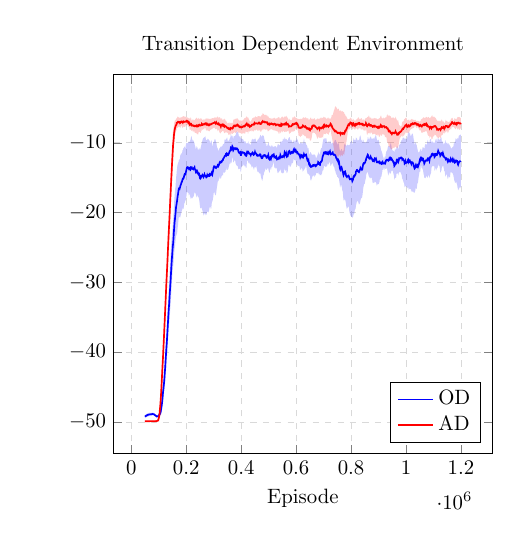}
        \caption{}
        \label{fig:sub_coupling}
    \end{subfigure}
    \caption{\textbf{Limits of Minimal Augmented State in Transition-Dependent Systems.} Learning curves of Tabular Q-Learning under OD and AD using the local augmented state $\tilde{s}_i = (o_{i, t-k}, a_{i, t-k:t-1})$, the delay vector is $k=[0,3]$. In (\ref{fig:sub_indep}), the environment is transition-independent. In (\ref{fig:sub_coupling}), agents can collide, introducing coupling. The grid is $(6\times 6)$, and the collision radius is of size $1$. Learning curves are averaged over 10 trajectories.}
    \label{fig:coupling}
\end{figure}

As shown in Figure \ref{fig:sub_coupling}, introducing inter-agent collisions breaks transition independence, causing the learning dynamics of OD and AD to fundamentally diverge. In this specific setting, AD converges to a better joint policy, whereas learning in the OD regime is noisier and has poorer performance.

This divergence is explained by the limits of our theoretical assumptions. Transition independence ensures that the local augmented state $\tilde{s}$ preserves the Markov property. When coupling is introduced, an agent's local transition becomes dependent on the concurrent, unobservable actions of its teammates. Consequently, $\tilde{s}$ loses its Markovian property, and the resulting asymmetry arises entirely from how the respective learning algorithms process these non-Markovian transitions under different delay structures. 

It is important to note that the poor performance of the OD regime here is specific to this environment's structure; other coupled problems may induce the opposite result. Overall, coupling breaks the sufficiency of the minimal augmented state and induces a structural asymmetry that makes the learning process fundamentally different across regimes. Therefore, to guarantee the theoretical equivalence of optimal policies, the remainder of our Grid World experiments use the Transition-Independent setting.

\subsection{Warm Start vs Cold Start: Implication of the Initialization Assumption}

Assumption \ref{assumption:init} is theoretically useful as it ensures that any divergence between the regimes is attributable solely to information delay dynamics, rather than initialization artifacts. However, it is the most restrictive in practice. In the rest of the paper, we call \textit{Warm Start} the initialization for an agent that respects Assumption \ref{assumption:init}. Conversely, we use \textit{Cold Start} to refer to a system where no action is taking effect on the environment during the first few timesteps, and hence the agent is subject to dead time for the span of the delay. Practically, a \textit{Warm Start} setting would allow the agent to plan its first actions ahead according to its policy based on arbitrary initial information. In many systems, however, AD would enforce a \textit{Cold Start} default initialization behavior within the delay time (e.g. agent is waiting). In this context, and with equivalent delay, it is easy to see that the OD regime provides a clear advantage for the agent as it can act on the system during delay time, whereas the agent subject to AD should wait for the action to be effective while observing the system. Therefore, we derive the following proposition:

\begin{proposition}[Strict Inclusion under Cold Start]
\label{prop:cold-start-inclusion}
Consider a relaxation of Assumption \ref{assumption:init} where the AD regime is subject to a "Cold Start" constraint.
Under this constraint, the set of admissible joint policies under AD $\Pi_{AD}(\mathbf{k})$ is a strict subset of those admissible under OD $\Pi_{OD}(\mathbf{k})$:
\begin{equation}
    \Pi_{AD}(\mathbf{k}) \subset \Pi_{OD}(\mathbf{k})
\end{equation}
\end{proposition}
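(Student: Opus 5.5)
Since $\Pi_{AD}(\mathbf{k})$ and $\Pi_{OD}(\mathbf{k})$ are formally defined on possibly different objects, I will read the inclusion $\Pi_{AD}(\mathbf{k}) \subset \Pi_{OD}(\mathbf{k})$ as an inclusion of \emph{behaviours}. I identify each admissible joint policy with the law it induces on the state--action--observation process $(s_t,\mathbf{a}_t,\mathbf{o}_t)_{t\ge 0}$, as in Theorem \ref{theorem:main-equ}. The claim then has two parts. First, there is an injective map $\Phi_{cold}:\Pi_{AD}(\mathbf{k}) \to \Pi_{OD}(\mathbf{k})$ that preserves the trajectory law. Second, this map is not surjective: some OD policy induces a law that no Cold Start AD policy can reproduce.

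For the inclusion, I formalize Cold Start as follows. For every agent $i$ with $k_i \ge 1$, the effective actions $a_{i,0},\dots,a_{i,k_i-1}$ are fixed to a designated idle action $a_i^{\varnothing} \in \mathcal{A}_i$. By Assumption \ref{assumption:deterministic-buffer}, this is a legitimate element of $\mathcal{A}_i$, possibly handled by the dynamics as a no-op. From $t \ge k_i$ onward, the Cold Start AD agent is exactly as in Lemma 1: its action effective at $t$ is drawn from $\pi_i(\cdot \mid h_i^{(t,k_i)})$, and the first $k_i$ entries of the action history are the deterministic idle actions.

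Given such a $\pi_i$, I define the OD policy $\phi_i(\pi_i)$ in two pieces. For $t<k_i$, its history is only $(\mathcal{I}_0, a_{i,0:t-1})$, and it outputs the Dirac mass on $a_i^{\varnothing}$. For $t\ge k_i$, it coincides with $\pi_i$; this is possible because Lemma 2 shows that the OD information set is the same $h_i^{(t,k_i)}$. I then rerun the factorization from Part 2 of the proof of Theorem \ref{theorem:main-equ}. The environment terms are invariant by Assumption \ref{assumption:no-leakage}. The policy terms agree at every $t$: they are Dirac masses on $a_i^{\varnothing}$ for $t<k_i$, and equal to $\pi_i$ afterwards. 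Hence the trajectory laws coincide by induction on $t$. Injectivity at the level of laws is immediate, since distinct induced laws are sent to themselves.

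For strictness, I need one agent $i$ with $k_i\ge1$ and an environment with $|\mathcal{A}_i|\ge 2$. I take any OD policy that, at $t=0$, puts positive mass on an action $a\neq a_i^{\varnothing}$, for instance conditioned only on $\mathcal{I}_0$. Under that policy $\Pr(a_{i,0}=a)>0$. Under every Cold Start AD policy, $\Pr(a_{i,0}=a)=0$. The two laws therefore differ on the marginal of $\mathbf{a}_0$, so no element of $\Pi_{AD}(\mathbf{k})$ maps to this policy, and the inclusion is strict.

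I expect the main obstacle to be the well-posedness of the statement rather than the calculation. The inclusion only makes sense once both policy sets are embedded in a common space, which I take to be the space of induced trajectory laws. In addition, strictness needs the mild nondegeneracy conditions $k_i\ge 1$ for some $i$ and $|\mathcal{A}_i|\ge2$. If $\mathbf{k}=0$ or all action sets are singletons, the two sets coincide, so I will state these conditions explicitly as hypotheses. A possible refinement is to show a strict gap in optimal value $J^*$ under a reward that penalizes idling. However, that is not needed for set inclusion, and I would only remark on it.
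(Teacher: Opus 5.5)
Your proposal is correct and follows essentially the same route as the paper. For inclusion, both arguments build an OD policy that plays the Dirac mass on the idle action for $t<k_i$ and coincides with the AD policy on $h_i^{(t,k_i)}$ for $t\ge k_i$; for strictness, both use an OD policy that puts mass on a non-idle action at $t=0$. Your additions are implicit in the paper's proof and make it cleaner: you read the inclusion as an inclusion of induced trajectory laws, and you state the nondegeneracy conditions $k_i\ge 1$ for some $i$ and $|\mathcal{A}_i|\ge 2$ explicitly.
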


\begin{proof}
The proof follows by establishing inclusion and strictness:

\textbf{1. Inclusion ($\subseteq$):} 
Let $\boldsymbol{\pi}_{AD} \in \Pi_{AD}(\mathbf{k})$ be any policy subject to the \textit{Cold Start} constraint. This implies that for all $t < k_i$, the effective action is $a_{i,t} = a_{\emptyset}$, corresponding to a default \textit{noop} action.
An agent in the OD regime can strictly emulate this behavior. We construct a valid policy $\boldsymbol{\pi}_{OD} \in \Pi_{OD}(\mathbf{k})$ such that:
\[
\pi_{OD}(a_{i,t} \mid h_t) = 
\begin{cases} 
\delta(a_{\emptyset}) & \text{if } t < k_i \\
\Phi(\pi_{AD})(a_{i,t} \mid h_t) & \text{if } t \ge k_i 
\end{cases}
\]
where $\delta(a_{\emptyset})$ denotes the Dirac measure, assigning a probability of 1 to the default action $a_{\emptyset}$.
Assuming the OD agent is capable of voluntarily ``waiting'' (selecting $a_{\emptyset}$), it can perfectly reproduce the trajectory distribution of the \textit{Cold Start} AD agent.

\textbf{2. Strictness ($\neq$):} 
Consider an OD policy $\boldsymbol{\pi}'_{OD}$ that selects a non-null action $a^* \neq a_{\emptyset}$ at $t=0$ with probability $1$. 
This generates a trajectory where $a_{i,0} = a^*$.
No policy in $\Pi_{AD}(\mathbf{k})$ subject to the Cold Start constraint can generate this trajectory, as the AD agent is forced to have $a_{\emptyset}$ at $t=0$.
Thus, there exist realizable outcomes in OD that are impossible in AD under this constraint.
\end{proof}

The superset argument was already identified in the li‡terature \citep{wu2024boosting} for the single-agent case. It justifies studying OD as a more general problem than AD. However, to the best of our knowledge, this had not been formally stated yet. Hence, Proposition \ref{prop:cold-start-inclusion} is closing this gap.\\
 
\textbf{Remark on Infinite Horizon:} The equivalence and reduction stated in \cite{katsikopoulos2003markov} in the fully observable single-agent case rely on an infinite-horizon assumption. They implicitly assume a \textit{Warm Start} initialization for their reduction.  Our framework is more formal, by enforcing consistent initialization (Assumption \ref{assumption:init}), we establish exact trajectory isomorphism for any finite horizon $T$. If we relax Assumption \ref{assumption:init} (i.e., operating under a \textit{Cold Start} in AD), strict trajectory equivalence is broken. To illustrate the structural superiority of the OD regime over a Cold Start AD regime in a finite horizon, we apply the realigned TD learning (\ref{eq:q-learning-realigned}) to the Grid world navigation problem with OD, AD with \textit{Warm Start} and AD with \textit{Cold Start}. Figure \ref{fig:rewards} effectively shows that OD and \textit{Warm Start} AD achieve strictly equivalent and superior returns compared to the AD regime with a \textit{Cold Start}.

\begin{figure}[htp]
    \centering
    \includegraphics[width=0.7\linewidth]{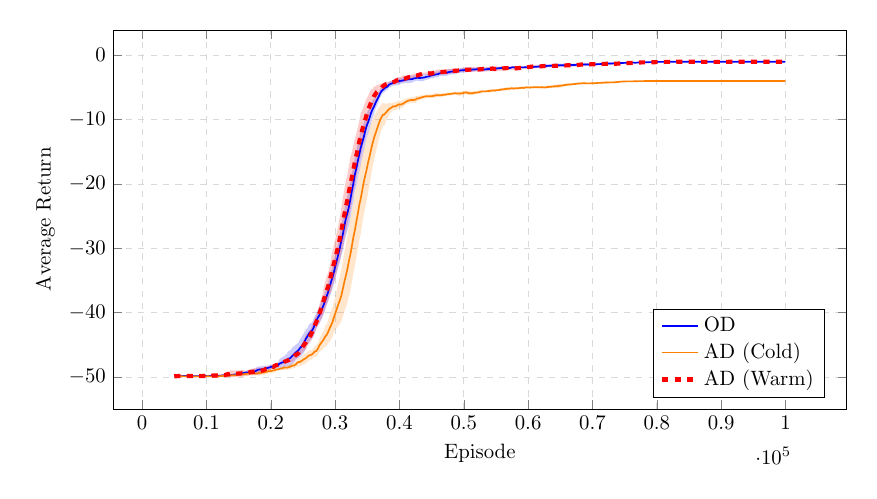}
    \caption{\textbf{Relaxing Initialization Assumption:} Average return with delay $k=[0,3]$ achieved by Q-Learning with augmented state $\tilde{s}_i$. The \textbf{AD (Warm Start)} configuration outperforms \textbf{AD (Cold Start)} and matches \textbf{OD} final performance. In OD and AD (Warm Start), the final average return is $-1$ and in AD (Cold Start), it is $-4$, corresponding to the $3$ steps of dead-time of one of the agent. We consider an open-grid of size $(7\times 7)$.}
    \label{fig:rewards}
\end{figure}

\subsection{Zero-shot Policy Transfer}

In previous sections, we demonstrated that OD and AD exhibit fundamentally asymmetric learning dynamics. Specifically, causal credit-assignment errors and strict initialization constraints make the AD regime practically harder to optimize. Additionally, coupling introduces non-Markovian transitions from the perspective of agents observing a minimal augmented state, limiting the applicability of such methods in partially observable and multi-agent settings.  However, despite these learning asymmetries and the theoretical reliance on Transition Independence, the structural isomorphism of the joint-policy spaces remains highly valuable for complex tasks. In practice, leveraging the local augmented state to perform zero-shot policy transfer from OD to AD emerges as a pragmatic solution to bypass these optimization bottlenecks entirely. 

Consequently, a major practical implication of our theoretical equivalence is that it enables zero-shot policy transfer from the OD regime to the AD regime, and conversely. Theoretically, this transfer is guaranteed as long as the problem respects our formal assumptions and the policy maintains the same input information structure. While these conditions are somewhat restrictive, this \textit{Zero-Shot} transfer argument remains highly relevant in the context of real systems applications. Most algorithms available in the literature are specifically designed for one or the other delay regime. Under our framework, for example, one can train a decentralized policy on a simulated environment that models OD using an efficient algorithm for the OD regime, and deploy the policy zero-shot on a fleet of agents in a real system constrained by AD.

Figure \ref{fig:zero-shot} shows zero-shot policy transfer from the OD regime (superset) to the AD regime on a more advanced cooperative environment: the Multiwalker \citep{gupta2017cooperative}\footnote{\url{https://pettingzoo.farama.org/environments/sisl/multiwalker/}}. In this environment, agents are bipedal robots trying to cooperatively transport a package placed on top of them through the map. They observe internal properties and sensor information. Robots falling or dropping the package induce a large penalty, and a reward is obtained for moving the package towards the right. This environment is highly challenging as the action space is a 4-dimensional continuous vector. We introduce delays into the environment through a custom delay wrapper that respects our baseline delay mechanics.

For each delay value, we trained an Actor-Critic policy with a Proximal Policy Optimization (PPO) algorithm \citep{schulman2017proximal} on an environment with OD. Since agents are identical, we use parameter sharing; that is, we train a policy that is shared among agents. This preserves decentralized execution but leverages experience from homogeneous agents. This has proven to be more efficient in complex cooperative multi-agent environments with homogeneous agents \citep{de2020independent}. Agents have access to their Local Augmented State as defined in Theorem \ref{theorem:independent-sufficient}. 

Obviously, such an environment cannot be considered transition-independent. However, the local augmented state offers a tractable solution in coupled complex environments, even though it may not be theoretically optimal compared to using the full Observation-Action History (which becomes intractable due to its high dimension). We evaluate the performance of the learned policy in the same environment with AD by measuring an average return over $300$ episodes. We observe that learned policies in OD effectively transfer to the equivalent AD problem with very minor loss of average return. Although this single empirical example does not constitute a universal guarantee, it highlights that the structural isomorphism can still provide a robust foundation for transfer even when environment dynamics violate transition independence. As expected, the performance of the learned policy and the zero-shot transfer degrade as the length of the delay increases. As a reference, an average reward of $0$ usually means that robots can stand still without falling and are barely able to move forward. An average reward of $150$ here means that agents consistently finish episodes by walking towards the goal. A completely failed episode would result in a $-110$ return.

\begin{figure}[htp]
    \centering
    \includegraphics[width=0.5\linewidth]{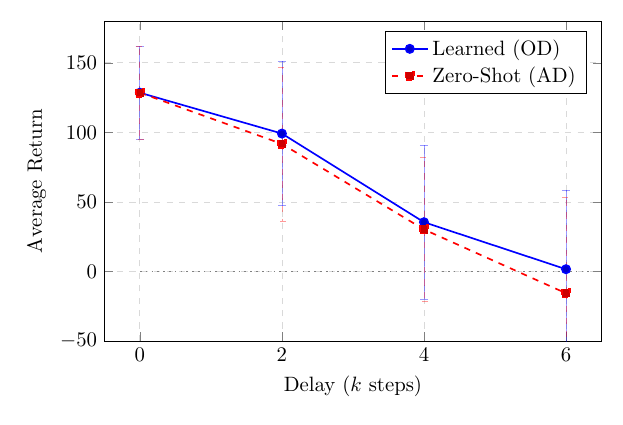}
    \caption{Performance of Zero-Shot policy transfer on the Multiwalker environment with 2-walkers with increasing delay. We use parameter sharing across agents for learning. For all delay values, agents have the same training budget of 20M steps.}
    \label{fig:zero-shot}
\end{figure}

\section{Discussion}

From the theoretical perspective, this work generalizes a well-known result for single-agent MDP to the partially observable and cooperative multi-agent setting in any horizon with a strict trajectory equivalence. Our contribution also lies in our formal set of conditions, highlighting some previously hidden assumptions, such as initialization artifacts. Therefore, our results for cooperative MARL are also interesting for Single-Agent systems and generally offer new insights on the equivalence of delay regimes in RL. We only address integer, constant delays. Thus, it remains restrictive but still has some practicality, as mentioned through our numerical experiments and from existing work on delay-aware systems \citep{chen2021delay,liu2024delay,monroc2022delay}. Introducing stochastic delays would firstly break Assumption 1 (FIFO action buffer), and is highly non-trivial in Dec-POMDPs settings as compared to existing results from single-agent RL literature.\\

As our approach for equivalence is purely formulated at the effective time, it sets aside the challenge of credit assignment through reward collection delay, either induced by pre-existing OD and AD or additional reward delay. This is very convenient from a theoretical view, and it separates the existence of equivalent optimal solutions in OD and AD regimes from the actual learning challenge specific to each regime. This separation supports a unified framework: since mixed delay configurations reduce to a pure OD system, we can focus on solving the OD problem. As shown in our zero-shot transfer experiments, a policy trained in this regime is valid for equivalent AD systems. Therefore, whether the same optimal policy can be learned under both delay regimes in any system (single or multi-agent) remains an open question.\\

\section{Conclusion}

In this work, we establish a fundamental theoretical framework for multi-agent reinforcement learning under delay. By formally decoupling the structural equivalence of observation and action delays from their practical learning dynamics, we resolve ambiguities in the literature. We demonstrate that while OD and AD yield isomorphic joint-policy spaces and identical trajectory distributions under appropriate initialization, the algorithmic challenges introduced by coupled non-stationarity and causal credit assignment render their learning processes fundamentally different in practice.\\

These results are intended to serve as a rigorous mathematical foundation for future algorithmic development and theoretical guarantees in delayed RL. Furthermore, while our current analysis is grounded in cooperative Dec-POMDPs, the state-action-observation isomorphism we establish is fundamentally agnostic to the reward structure. This strongly suggests that our equivalence framework extends naturally beyond cooperative tasks to competitive or general-sum Markov Games. Future work should explore this generalization and focus on developing algorithms that acknowledge differences between OD and AD learning and overcome challenges in highly coupled, mixed-delay systems.


\subsection*{Acknowledgments}
This work is carried out in the framework of AI-NRGY (Grant ref.: ANR-22-PETA-0004) and REDEEM (Grant ref.: ANR-23-PEIA-005) projects funded by France 2030. 

\clearpage
\vskip 0.2in
\bibliographystyle{unsrtnat}
\bibliography{main}

\newpage

\appendix
\section{Algorithms for Experimental Setup}\label{suppl:exp-setup}

We give the algorithms describing the dynamics of the environments (\ref{alg:env_step}), the delay buffer logic (\ref{alg:delay_step}), and the initialization mechanisms for AD regime (\ref{alg:warm_start}).

\begin{algorithm}[H]
\caption{Base Environment Step: Strongly Coupled Grid World}\label{alg:env_step}
\begin{algorithmic}[1]
\State \textbf{Input:} Joint actions $\mathbf{a} = (a^1, a^2)$, Current positions $\mathbf{p} = (p^1, p^2)$
\State \textbf{Parameters:} Grid size $N$, Bottleneck gap $g$, Radius $r$
\State \textbf{Output:} Next positions $\mathbf{p}_{new}$, Reward $R$, Terminal flag \text{Done}

\Statex \textbf{\% 1. Proposed Movement \& Wall Constraints}
\For{$i \in \{1, 2\}$}
    \State $p^i_{target} \leftarrow \text{Clip}(p^i + \text{move}(a^i), [0, N-1])$
    \If{$p^i_{target}$ is Wall \textbf{and} $p^i_{target} \neq g$}
        \State $p^i_{new} \leftarrow p^i$ \Comment{Revert if hitting a wall}
    \Else
        \State $p^i_{new} \leftarrow p^i_{target}$
    \EndIf
\EndFor

\Statex \textbf{\% 2. Agent Collision \& Repel Logic}
\If{$\| p^1_{new} - p^2_{new} \|_2 \leq r$}
    \State $\mathbf{p}_{new} \leftarrow \mathbf{p}$ \Comment{Cancel move for both agents}
    \If{$\| p^1_{new} - p^2_{new} \|_2 \leq r$} \Comment{If still overlapping}
        \State $v \leftarrow p^1_{new} - p^2_{new}$
        \State $dir \leftarrow \text{sign}(v)$ (or $(1, 1)$ if $v=0$)
        \For{$i \in \{1, 2\}$}
            \State $p_{push} \leftarrow \text{Clip}(p^i_{new} \pm dir, [0, N-1])$
            \If{$p_{push}$ is not Wall}
                \State $p^i_{new} \leftarrow p_{push}$ \Comment{Apply repel force}
            \EndIf
        \EndFor
    \EndIf
\EndIf

\Statex \textbf{\% 3. Reward \& Termination}
\State $R \leftarrow -1$
\If{$p^1_{new} == \text{Target}^1$ \textbf{and} $p^2_{new} == \text{Target}^2$}
    \State $R \leftarrow 10$, $\text{Done} \leftarrow \text{True}$
\ElsIf{StepCount $\geq$ MaxSteps}
    \State $\text{Done} \leftarrow \text{True}$
\EndIf
\State \Return $\mathbf{p}_{new}, R, \text{Done}$
\end{algorithmic}
\end{algorithm}

\clearpage

\begin{algorithm}[H]
\caption{Wrapper Step: Asymmetric Action \& Observation Delay}\label{alg:delay_step}
\begin{algorithmic}[1]
\State \textbf{Input:} Joint selected actions $\mathbf{a}_t = (a_t^1, a_t^2)$
\State \textbf{Parameters:} Delays $\mathbf{k} = (k_1, k_2)$, Mode $M \in \{\text{OD}, \text{AD}\}$
\State \textbf{State Variables:} Action Buffers $\mathcal{B}_{act}^i$, Obs Buffers $\mathcal{B}_{obs}^i$

\Statex \textbf{\% 1. Determine Effective Actions}
\For{$i \in \{1, 2\}$}
    \If{$k_i = 0$}
        \State $a_{exec}^i \leftarrow a_t^i$
    \ElsIf{$M = \text{AD}$}
        \State $a_{exec}^i \leftarrow \text{Dequeue}(\mathcal{B}_{act}^i)$ \Comment{Execute oldest action}
        \State $\text{Enqueue}(\mathcal{B}_{act}^i, a_t^i)$ \Comment{Queue new action}
    \ElsIf{$M = \text{OD}$}
        \State $a_{exec}^i \leftarrow a_t^i$ \Comment{Execute immediately}
        \State $\text{Enqueue}(\mathcal{B}_{act}^i, a_t^i)$ \Comment{Queue for history context}
    \EndIf
\EndFor

\Statex \textbf{\% 2. Environment Transition}
\State $s_{t+1}, R, \text{Done} \leftarrow \text{BaseEnv.Step}(a_{exec}^1, a_{exec}^2)$
\For{$i \in \{1, 2\}$}
    \State $\text{Enqueue}(\mathcal{B}_{obs}^i, s_{t+1}^i)$
\EndFor

\Statex \textbf{\% 3. Construct Augmented State $\tilde{s}_{t+1}$}
\For{$i \in \{1, 2\}$}
    \If{$k_i = 0$ \textbf{or} $M = \text{AD}$}
        \State $v^i \leftarrow s_{t+1}^i$ \Comment{Agent sees true current state}
    \ElsIf{$M = \text{OD}$}
        \State $v^i \leftarrow \text{Front}(\mathcal{B}_{obs}^i)$ \Comment{Agent sees delayed state $s_{t+1-k_i}^i$}
    \EndIf
    \State $\tilde{s}_{t+1}^i \leftarrow (v^i, \mathcal{B}_{act}^i)$
\EndFor

\State \Return $\tilde{s}_{t+1}, R, \text{Done}$
\end{algorithmic}
\end{algorithm}

\begin{algorithm}[H]
\caption{Wrapper Initialization (Warm vs. Cold Start)}\label{alg:warm_start}
\begin{algorithmic}[1]
\State \textbf{Input:} Initial Policy $\pi_0$ (Optional)
\State \textbf{Output:} Initial Augmented State $\tilde{s}_0$
\State $s_0 \leftarrow \text{BaseEnv.Reset}()$

\Statex \textbf{\% 1. Cold Initialization (Default actions)}
\For{$i \in \{1, 2\}$}
    \State Fill $\mathcal{B}_{obs}^i$ with $k_i + 1$ copies of $s_0^i$
    \State Fill $\mathcal{B}_{act}^i$ with $k_i$ copies of $a_{default}$ (Stay)
\EndFor

\Statex \textbf{\% 2. Warm Start Rollout}
\If{$\pi_0$ is provided}
    \State $k_{max} \leftarrow \max(k_1, k_2)$
    \For{$step = 1 \dots k_{max}$}
        \State $\tilde{s}_{current} \leftarrow \text{GetAugmentedState}(s_0)$
        \State $(a^1, a^2) \leftarrow \pi_0(\tilde{s}_{current})$
        \For{$i \in \{1, 2\}$}
            \If{$k_i > 0$}
                \State $\text{Enqueue}(\mathcal{B}_{act}^i, a^i)$
            \EndIf
        \EndFor
    \EndFor
\EndIf
\State \Return $\text{GetAugmentedState}(s_0)$
\end{algorithmic}
\end{algorithm}

\clearpage

\section{Experiments Parameters}

\subsection{Grid World Experiments}
\begin{table}[h!]
    \centering
    \renewcommand{\arraystretch}{1.2}
    \begin{tabular}{@{}ll@{}}
        \toprule
        \textbf{Hyperparameter} & \textbf{Value} \\
        \midrule
        \multicolumn{2}{@{}l}{\textit{Q-Learning Parameters}} \\
        Learning Rate ($\alpha$) & $0.1$ \\
        Discount Factor ($\gamma$) & $0.99$ \\
        Initial Exploration ($\epsilon_{\text{start}}$) & $1.0$ \\
        Minimum Exploration ($\epsilon_{\text{min}}$) & $0.05$ \\
        Exploration Decay Schedule & Linear over first $20\%$ of episodes \\
        \midrule
        \multicolumn{2}{@{}l}{\textit{Environment Parameters}} \\
        Grid Size & $6 \times 6$ \\
        Max Steps per Episode & $50$ \\
        Delay Configuration ($\mathbf{k}$) & $(0, 3)$ \\
        Collision Radius & $1$ \textit{(Coupled setting only)} \\
        \midrule
        \multicolumn{2}{@{}l}{\textit{Training \& Evaluation Parameters}} \\
        Total Episodes (Coupled Environment) & $600,000$ \\
        Total Episodes (Independent Environment) & $60,000$ \\
        Number of Independent Runs (Seeds) & $10$ \\
        Evaluation Frequency & Every $10$ episodes \\
        \bottomrule
    \end{tabular}
    \caption{Hyperparameters used for the tabular Q-Learning experiments.}
    \label{tab:hyperparameters}
\end{table}

\begin{table}[h!]
    \centering
    \renewcommand{\arraystretch}{1.2}
    \begin{tabular}{@{}ll@{}}
        \toprule
        \textbf{Hyperparameter} & \textbf{Value} \\
        \midrule
        \multicolumn{2}{@{}l}{\textit{Q-Learning Parameters}} \\
        Learning Rate ($\alpha$) & $0.1$ \\
        Discount Factor ($\gamma$) & $0.99$ \\
        Initial Exploration ($\epsilon_{\text{start}}$) & $1.0$ \\
        Minimum Exploration ($\epsilon_{\text{min}}$) & $0.05$ \\
        Exploration Decay Schedule & Linear over first $20\%$ of episodes \\
        \midrule
        \multicolumn{2}{@{}l}{\textit{Environment Parameters}} \\
        Grid Size & $7 \times 7$ \\
        Max Steps per Episode & $50$ \\
        Delay Configuration ($\mathbf{k}$) & $(0, 3)$ \\
        Coupling Dynamics & Transition Independent (No Collisions) \\
        \midrule
        \multicolumn{2}{@{}l}{\textit{Training \& Evaluation Parameters}} \\
        Total Episodes & $100,000$ \\
        Number of Independent Runs (Seeds) & $10$ \\
        Evaluation Frequency & Every $10$ episodes \\
        Initialization Modes Tested & OD (Warm), AD (Warm), AD (Cold) \\
        \bottomrule
    \end{tabular}
    \caption{Hyperparameters used for the Warm vs. Cold Start tabular Q-Learning experiments.}
    \label{tab:hyperparameters_exp2}
\end{table}

\begin{table}[h!]
    \centering
    \renewcommand{\arraystretch}{1.2}
    \begin{tabular}{@{}ll@{}}
        \toprule
        \textbf{Hyperparameter} & \textbf{Value} \\
        \midrule
        \multicolumn{2}{@{}l}{\textit{Q-Learning Parameters}} \\
        Learning Rate ($\alpha$) & $0.1$ \\
        Discount Factor ($\gamma$) & $0.99$ \\
        Initial Exploration ($\epsilon_{\text{start}}$) & $1.0$ \\
        Minimum Exploration ($\epsilon_{\text{min}}$) & $0.05$ \\
        Exploration Decay Schedule & Linear over first $20\%$ of episodes \\
        \midrule
        \multicolumn{2}{@{}l}{\textit{Environment Parameters}} \\
        Grid Size & $7 \times 7$ \\
        Max Steps per Episode & $50$ \\
        Delay Configuration ($\mathbf{k}$) & $(0, 3)$ \\
        Coupling Dynamics & Transition Independent (No Collisions) \\
        \midrule
        \multicolumn{2}{@{}l}{\textit{Training \& Evaluation Parameters}} \\
        Total Episodes & $100,000$ \\
        Number of Independent Runs (Seeds) & $10$ \\
        Evaluation Frequency & Every $10$ episodes \\
        Initialization Strategy & Warm Start (for all modes) \\
        Modes Tested & OD (Standard), AD (Standard), AD (Aligned) \\
        \bottomrule
    \end{tabular}
    \caption{Hyperparameters and algorithmic configuration used for the Effective-Time Alignment Q-Learning experiments.}
    \label{tab:hyperparameters_exp3}
\end{table}

\clearpage
\subsection{Multiwalker PPO Experiments}

\begin{table}[h!]
    \centering
    \renewcommand{\arraystretch}{1.2}
    \begin{tabular}{@{}ll@{}}
        \toprule
        \textbf{Hyperparameter} & \textbf{Value} \\
        \midrule
        \multicolumn{2}{@{}l}{\textit{PPO Algorithm Parameters}} \\
        Learning Rate & Linear decay, initial: $1 \times 10^{-4}$ \\
        Clip Range & Linear decay, initial: $0.2$ \\
        Horizon ($n_{\text{steps}}$) & $2048$ \\
        Batch Size & $128$ \\
        Number of Epochs & $10$ \\
        Target KL Divergence & $0.015$ \\
        Entropy Coefficient & $0.01$ \\
        GAE Lambda ($\lambda$) & $0.95$ \\
        Value Function Coefficient & $0.5$ \\
        Max Gradient Norm & $0.5$ \\
        \midrule
        \multicolumn{2}{@{}l}{\textit{Network Architecture}} \\
        Feature Extractor & Linear (to $128$) $\to$ LayerNorm $\to$ Tanh \\
        Actor Network (pi) & MLP: $[128, 128]$ with Tanh activation \\
        Critic Network (vf) & MLP: $[128, 128]$ with Tanh activation \\
        \midrule
        \multicolumn{2}{@{}l}{\textit{Environment Parameters (Multiwalker-v9)}} \\
        Number of Walkers & $2$ \\
        Max Cycles & $500$ \\
        Position Noise & $1 \times 10^{-3}$ \\
        Rewards (Forward / Fall / Terminate) & $1.0$ / $-10.0$ / $-100.0$ \\
        Shared Reward & True \\
        \midrule
        \multicolumn{2}{@{}l}{\textit{Training \& Zero-Shot Transfer Parameters}} \\
        Training Modality & Observation Delay (OD) \\
        Max Training Timesteps & $20,000,000$ \\
        Early Stopping Threshold & $200.0$ (Checked every $50,000$ steps) \\
        Zero-Shot Transfer Evaluation & $100$ episodes on Action Delay (AD) \\
        \bottomrule
    \end{tabular}
    \caption{Hyperparameters and environment configuration for the PPO Multiwalker experiments.}
    \label{tab:hyperparameters_multiwalker}
\end{table}

\end{document}